\DeclarePairedDelimiter{\ceil}{\lceil}{\rceil}
\newcommand{\RN}[1]{%
  \textup{\uppercase\expandafter{\romannumeral#1}}%
}
\DeclareMathOperator*{\argmax}{arg\,max}
\newtheorem{definition}[]{Definition}
\newtheorem{assumption}{Assumption}
\newtheorem{lemma}{Lemma}
\newtheorem{theorem}{Theorem}
\newtheorem{problem}{Problem}
\newtheorem{policy}{Policy}
\begin{document}

\title{Pareto Monte Carlo Tree Search \\for Multi-Objective Informative Planning}



\IEEEoverridecommandlockouts
\author{Weizhe Chen, Lantao Liu
\thanks{W. Chen and L. Liu are with 
the School of Informatics, Computing, and Engineering  at Indiana University, Bloomington, IN 47408, USA. E-mail:
{\tt\small \{chenweiz, lantao\}@iu.edu}.
}
}


%

\maketitle

\begin{abstract}
In many environmental monitoring scenarios,  the sampling robot needs to simultaneously explore the environment and exploit features of interest with limited time.
We present an anytime multi-objective informative planning method called Pareto Monte Carlo tree search which allows the robot to handle potentially competing objectives such as exploration versus exploitation.
The method produces optimized decision solutions for the robot based on its knowledge (estimation) of the environment state, leading to better adaptation to environmental dynamics.
We provide algorithmic analysis on the critical tree node selection step and show that the number of times choosing sub-optimal nodes is logarithmically bounded and the search result converges to the optimal choices at a polynomial rate.
\end{abstract}

\IEEEpeerreviewmaketitle

\section{INTRODUCTION}
There is an increasing need that mobile robots are tasked to gather information from our environment.
Navigating robots to collect samples with the largest amount of information is called informative planning~\cite{binney13, Meliou07, Singh2007}. 
The basic idea is to maximize information gain using information-theoretic methods, where the information gain (or {\em informativeness}) is calculated from the estimation uncertainty based on some prediction models such as the set of Gaussian systems. 
Informative planning is challenging due to the large and complex searching space.
Such problems have been shown to be NP-hard~\cite{singh2009efficient} or PSPACE-hard~\cite{reif1979complexity} depending on the form of objective functions and the corresponding searching space.


Existing informative planning work is heavily built upon information-theoretic framework.
The produced solutions tend to guide the robot to explore the uncertain or unknown areas and reduce the estimation uncertainty (e.g., the entropy or mutual information based efforts). 
Aside from the estimation uncertainty, there are many other estimation properties that we are interested in.
For example, when monitoring some biological activity in a lake, scientists are interested in areas with high concentration -- known as \textit{hotspots} -- of biological activity~\cite{mccammon2018}.
Similarly,  plume tracking is useful for detection and description of oil spills, thermal vents, and harmful algal blooms~\cite{susca2008}.
In these tasks, the robot needs to explore the environment to discover hotspots first, and then visit different areas with differing effort (frequency) to obtain the most valuable samples and catch up with the (possibly) spatiotemporal environmental dynamics.

\begin{figure}[htbp] \vspace{0pt}
    \centering
    \subfloat[Bathymetry\label{fig:newzealand}]{\includegraphics[width=0.48\linewidth]{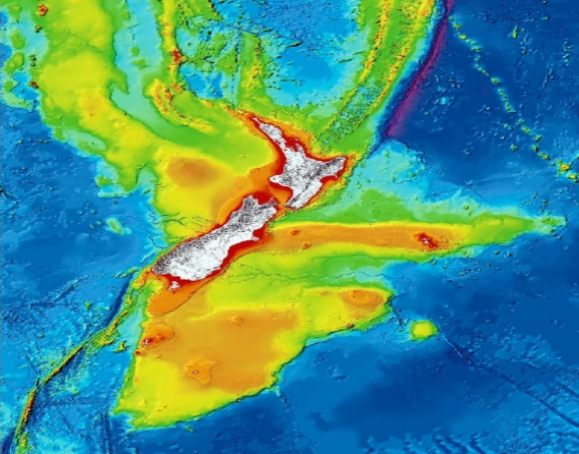}}
    \ \ 
    \subfloat[Example trajectory\label{fig:pull_figure}]{\includegraphics[width=0.48\linewidth]{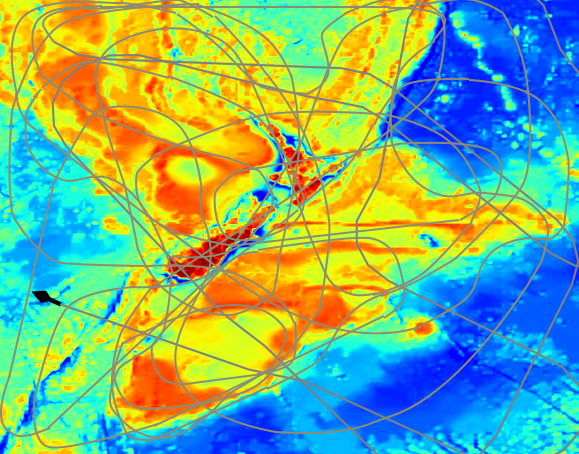}}
    \caption{Illustrative multi-objective informative planning trajectory for modeling the Bathymetry of New Zealand's Exclusive Economic Zone. The goal is to explore the environment while visiting areas with high value and variability more frequently. (Source: NIWA) \vspace{-10pt}
    }  \vspace{-12pt}
\end{figure}

Since exploration and exploitation cannot be achieved simultaneously most of the time,  different planning objectives may lead to distinct motion trajectories.
The problem falls within the spectrum of multi-objective optimization in which the optima are not unique.
In fact, continua of optimal solutions are possible.
The goal is to find the Pareto optimal set, or so called Pareto front.
The solutions in the Pareto optimal set cannot be improved for any objective without hurting other objectives. 
A straightforward solution to the multi-objective problem is to convert it into a single-objective problem by linear scalarization (weighted sum of objectives is one of the linear scalarization method).
However, linear scalarization might be impossible, infeasible, or undesirable for the following reasons.
First of all, objectives might be conflicting.
Taking algal bloom monitoring as an example, high-concentration areas and the areas with high temporal variability (i.e., spreading dynamics) are both important.
We expect the robot to visit these areas more frequently, but these two types of areas are not necessarily in the same direction.
Furthermore, the weights among different objectives are hard to determine in the linear scalarization approaches.
Last but not least, linear scalarization based approaches fail to discover the solutions in the non-convex regions of the Pareto front.

This paper presents the following contributions:
\begin{itemize}
    \item 
    Different from existing informative planning approaches where information-seeking is the main objective, we propose a new generic planning method that optimizes over multiple (possibly) competing objectives and constraints.
    \item We incorporate the Pareto optimization into the  Monte Carlo tree search process and further design an anytime and non-myopic planner for in-situ decision-making. 
    \item We provide in-depth algorithmic analysis which reveals bounding and converging behaviors of the search process.
    \item  We perform thorough simulation evaluations with both synthetic and real-world data. Our results show that the robot exhibits desired composite behaviors which are optimized from corresponding hybrid objectives. 

\end{itemize}

\section{RELATED WORK}


Informative planning maximizes the collected information (informativeness) by exploring (partially) unknown environment during its sampling process~\cite{binney13, Low2009thesis}. 
Comparing with the lawnmower based sweeping style sampling mechanism which focuses on spatial resolution, the informative planning method tends to achieve the spatial coverage quickly with the least estimation  uncertainty~\cite{Singh2007}. 
Due to these reasons, the information planning has been widely used for the spatiotemporal environmental monitoring. 
To explore and learn the environment model, a commonly-used approach in spatial statistics is the Gaussian Process Regression~\cite{Rasmussen2005}.
Built on the learned environmental model, 
path and motion control can be carried out which is a critical ability for autonomous robots operating in unstructured  environments~\cite{Guestrin2003, leonard2010coordinated}.

Representative informative planning approaches include, e.g., algorithms based on a recursive-greedy style~\cite{Meliou07, Singh2007} where the informativeness is generalized as submodular function and a sequential-allocation mechanism is designed in order to obtain subsequent waypoints. 
This recursive-greedy framework has been extended later by incorporating obstacle avoidance and diminishing returns~\cite{binney13}. 
In addition, a differential entropy based framework~\cite{Low2009thesis} was proposed where a batch of waypoints can be obtained through dynamic programming.
Recent work also reveals that online informative planning is possible~\cite{ma2018data}.
The sampling data is thinned based on their contributions learned by a sparse variant of Gaussian Process.
There are also methods optimizing over complex routing constraints (e.g., see~\cite{SolteroSR12,YuSchRus14ICRA}). 

Pareto optimization has been used in designing motion planners to optimize over the length of a path and the probability of collisions~\cite{POMP}. 
Recently, a sampling based method has also been proposed to generate Pareto-optimal trajectories for multi-objective motion planning~\cite{lee2018sampling}.
In addition, 
multi-robot coordination also benefits from multi-objective optimization. The goals of different robots are simultaneously optimized~\cite{ghrist2004pareto, lavalle1998motion}.
To balance the operation cost and the travel discomfort experienced by users, the multi-objective fleet routing algorithms compute the Pareto-optimal fleet operation plans~\cite{Cp2018MultiObjectiveAO}.

Related work also includes the multi-objective reinforcement learning~\cite{roijers2013survey}.
Particularly, the prior work multi-objective Monte Carlo tree search (MO-MCTS) is closely relevant to our method~\cite{wang2012multi}.
Unfortunately, MO-MCTS is computationally prohibitive and cannot be used for online planning framework.
Vast computational resources are needed in order to maintain a global Pareto optimal set with all the best solutions obtained so far.
In contrast, we develop a framework that maintains a local approximate Pareto optimal set in each node which can be processed in a much faster way. Our approach is also flexible and adaptive with regards to capturing environmental variabilities of different stages. 




\section{PRELIMINARIES}

\begin{figure}[htbp] \vspace{0pt}
    \centering
    \includegraphics[width=1\linewidth]{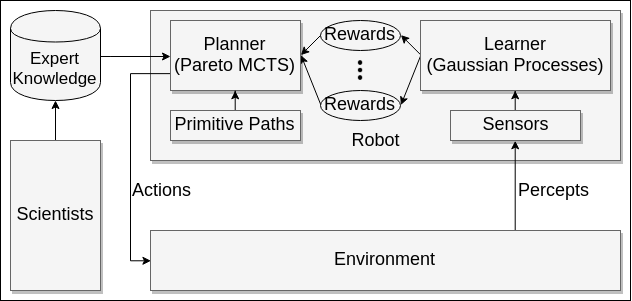}
    \caption{System flow chart of Pareto MCTS for the online planning framework.\vspace{-15pt}}
    \label{fig:system_map} \vspace{-10pt}
\end{figure}


\subsection{Pareto Optimality}
Let $\bm{X}_k$ be a $D$-dimensional reward vector associated with a choice $k$ and $X_{k, d}$ be the $d$-th element.
The $i$-th choice is better than, or \textit{dominating}, another choice $j$, denoted by $i \succ j$ or $j \prec i$, if and only if the following conditions are satisfied:
\begin{enumerate}
    \item Any element of $\bm{X}_i$ is not smaller than the corresponding element in $\bm{X}_j$: \\
    $\forall  d= 1, 2, \dots, D, X_{i,d} \geqslant X_{j,d}$;
    \item At lease one element of $\bm{X}_i$ is larger than the corresponding element in $\bm{X}_j$: \\
    $\exists d \in \{1, 2, \dots, D\} \text{ such that } X_{i,d} > X_{j,d} $.
\end{enumerate}
If only the first condition is satisfied, we say that choice $i$ is \textit{weakly-dominating} choice $j$, denoted by $i \succeq j$ or $j \preceq i$.

In some cases, neither $i \succeq j$ nor $j \succeq i$ hold.
We say that choice $i$ is \textit{incomparable} with choice $j$ ($i || j$) if and only if there exists one dimension $d_1$ such that $X_{i, d_1} > X_{j, d_1}$, and another dimension $d_2$ such that $X_{i, d_2} < X_{j, d_2}$.
Also, we say that choice $i$ is \textit{non-dominated} by choice $j$ ($i \nprec j$ or $j \nsucc i$) if and only if there exists one dimension $d$ such that $X_i^d > X_j^d$.

\subsection{Multi-Objective Informative Planning}
In the general case, \textit{multi-objective informative planning} requires solving the following maximization problem:
\begin{align}
    &\bm{a}^* = \argmax_{\bm{a} \in \mathcal{A}} \left\{ I(\bm{a}), F_1(\bm{a}), \dots, F_{D-1}(\bm{a}) \right\},\\ 
    &\text{ s.t. } C_{\bm{a}} \leq B \notag,
\end{align}
where $\bm{a}$ is a sequence of actions, $\mathcal{A}$ is the space of possible action sequences, $B$ is a budget (e.g. time, energy, memory, or number of iterations), $C_{\bm{a}}$ is the cost of budget, and $I(\bm{a})$ is a function representing the information gathered by executing the action.
$F_d(\bm{a}), d\in \{1, \dots, D-1\}$  are other objective functions defining which types of behaviors are desired.


\section{APPROACH}

\begin{figure*}[htbp] \vspace{-5pt}
    \centering
    \includegraphics[width=0.65\linewidth]{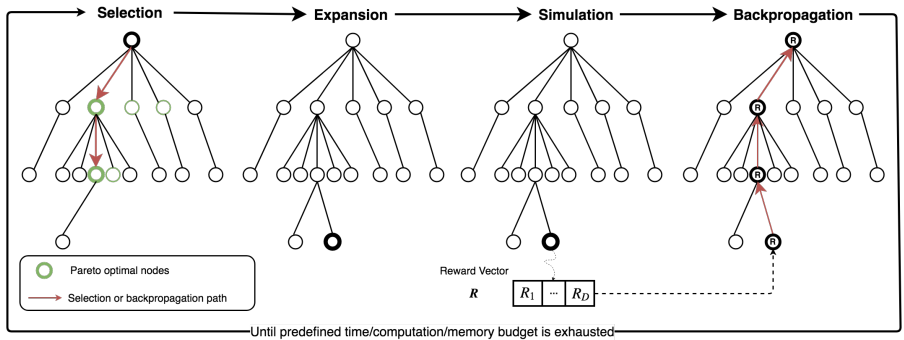}
    \caption{Illustration of main steps of Pareto Monte Carlo tree search (Pareto MCTS).\vspace{-15pt} }
    \label{fig:pareto_mcts} \vspace{-15pt}
\end{figure*}

\subsection{Methodology Overview}

An overview of our method is shown in Fig.~\ref{fig:system_map}.
The planner computes an informative path using the Pareto MCTS algorithm based on  the current estimation of the environment. 
As the robot travels along the path, it continuously receives observations of the target variables \deleted{describing the environmental attributes of interest}.
Then, these newly acquired samples are used to refine its estimation of the environment, which in turn influences the planning in the next round.


The specific form of rewards depend on the applications.
For example, in the informative planning problem, the rewards can be defined as the reduced amount of estimation entropy, accumulated amount of mutual information, etc.
In hotspot exploration and exploitation task, the reward of a given path can be defined as the sum of rewards of all sampling points along the path.
Pareto MCTS searches for the optimal actions for current state until a given time budget is exhausted.
The inputs of Pareto MCTS are a set of available {\em primitive paths} at every possible state and the robot's current knowledge (estimation) of the world. 
It is worth noting that,  the multi-objective framework also allows us to incorporate prior knowledge, as illustrated on the left side of Fig.~\ref{fig:system_map}.


\subsection{Pareto Monte Carlo Tree Search}
The proposed Pareto Monte Carlo tree search (Pareto MCTS) is a planning method for finding Pareto optimal decisions within a given horizon.
The tree is built incrementally in order to expand the most promising subtrees first.
This is done by taking advantage of the information gathered in previous steps.
Each node in the search tree represents a state of the domain (in our context it is a location to be sampled), and the edges represent actions leading to subsequent states.

The framework of Pareto MCTS is outlined in Fig.~\ref{fig:pareto_mcts}. In each iteration, it can be broken down into four main steps:
\begin{enumerate}
    \item \textit{Selection}: starting at the root node, a child node selection policy (described later on) is applied recursively to descend through the tree until an expandable node with unvisited (unexpanded) children is encountered.
    \item \textit{Expansion}: an action is chosen from the aforementioned expandable node.  A child node is constructed according to this action and connected to the expandable node.
    \item \textit{Simulation}: a simulation is run from the new node based on the predefined default policy. In our case, the default policy chooses a random action from all available actions. The reward vector of this simulation is then  returned.
    \item \textit{Backpropagation}: the obtained reward is backed up or backpropagated to each visited node in the selection and expansion steps to update the attributes in those nodes. In our algorithm, each node stores the number of times it has been visited and the cumulative reward obtained by simulations starting from this node.
\end{enumerate}
These steps are repeated until the maximum number of iterations is reached or a given time limit is exceeded.

The most challenging part of designing Pareto MCTS is the selection step.
We want to select the most promising node to expand first in order to improve the searching efficiency.
However, which node is the most promising is unknown so that the algorithm needs to estimate it.
Therefore, in the selection step, one must balance exploitation of the recently discovered most promising child node and exploration of alternatives which may turn out to be a superior choice at later time.

In the case of scalar reward, the most promising node is simply the one with highest expected reward.
However, the reward is herein a vector corresponding to multiple objectives.
In the context of multi-objective optimization, the optima are no longer unique.
Instead, there is a set of optimal choices, each of which is considered equally best.
Below we define the Pareto optimal node set.
The nodes in the Pareto optimal set are incomparable to each other and will not be dominated by any other nodes.

\begin{definition}[Pareto Optimal Node Set]\label{pareto_optimal_set}
Given a set of nodes $\mathcal{V}$, a subset $\mathcal{P}^* \subset \mathcal{V}$ is the Pareto optimal node set, in terms of expected reward, if and only if
\begin{equation*}
    \begin{cases}
        \forall v_i^* \in \mathcal{P}^* \text{ and } \forall v_j \in \mathcal{V}, v_i^* \nprec v_j,\\
        \forall v_i^*, v_j^* \in \mathcal{P}^*, v_i^* || v_j^*.
    \end{cases}
\end{equation*}
\end{definition}

We treat the node selection problem as a multi-objective multi-armed bandit problem.
As shown in Alg.~\ref{alg:pareto_best_child}, the Pareto Upper Confidence Bound (Pareto UCB) for each child node is first computed according to Eq.~\eqref{eq:pareto_ucb}.
Then an approximate Pareto optimal set is built using the resulting Pareto UCB vectors (see the green nodes in Fig.~\ref{fig:pareto_mcts}).
Finally, the best child node is chosen from the Pareto optimal set uniformly at random.
Note that 
the reason for choosing the best child randomly is that the Pareto optimal solutions are considered equally optimal if no preference information is given.
However, if domain knowledge is available or preference is specified, one can choose the most preferable child from the Pareto optimal set.
For example, in the environmental monitoring task, one might expect the robot to explore the environment in the early stage to identify some important areas and spend more effort exploiting these areas in the later stage.
In such case, we can choose the Pareto optimal node with highest information gain in the beginning because the information gain based planning tends to cover the space quickly.
Other types of rewards can be chosen later to concentrate on examining the details locally.

Note that this is different from weighting different objectives and solve a single-objective problem, because choosing a preferred solution from a given set of optimal solutions is often easier than determining the quantitative relationship among different objectives.
For instance, it is difficult to quantify how much the information gain is more important than exploiting high-value areas in the hotspot monitoring task.
However, given several choices, we may pick the most informative choice in the very beginning, 
and another one with highest target value related reward in a later stage.

\begin{algorithm}[htbp]
\label{alg:pareto_best_child}
  \DontPrintSemicolon
    \SetKwProg{Fn}{Function}{}{}
    \Fn{$\text{Search}(s_0)$}{
        create root node $v_0$ with state $s_0$\;
        \While{within computational budget}{
            $v_\text{expandable} \leftarrow \text{ Selection}(v_0)$\;
            $v_\text{new} \leftarrow \text{ Expansion}(v_\text{expandable})$\;
            $\text{RewardVector} \leftarrow \text{ Simulation }(v_\text{new}.s)$\;
            $\text{Backpropagation}(v_\text{new}, \text{RewardVector})$\;
        }
        \Return $\text{MostVisitedChild}(v_0)$\;
    }
    
    \Fn{$\text{Selection}(v)$}{
        \While{$v$ is fully expanded}{
            $v \leftarrow \text{ParetoBestChild}(v)$\;
        }
        \Return $v$\;
    }
    
    \Fn{$\text{ParetoBestChild}(v)$}{
        compute Pareto UCB for each child $k$:
        \begin{equation}\label{eq:pareto_ucb}
        \bm{U}(k) = \frac{v_k.\bm{X}}{v_k.n} + \sqrt{\frac{4\ln{n} + \ln{D}}{2v_k.n}}
        \end{equation}\;
        build approximate Pareto optimal node set $v.\mathcal{P}$ based on $\bm{U}(k)$\;
        choose a child $v_{\text{best}}$ from $v.\mathcal{P}$ uniformly at random\;
        \Return $v_{\text{best}}$\;
    }
 \caption{Pareto MCTS}
Note: We use $v.\text{attribute}$ to represent a node attribute of $v$. In Eq.~\eqref{eq:pareto_ucb}, $v_k.\bm{X}$ is the cumulative reward of the $k$-th child of node $v$, $v_k.n$ is the number of times that the $k$-th child has been visited, and $D$ is the number of objectives. Some standard components, such as Simulation, Expansion, and Backpropagation, are not shown due to the lack of space.
\end{algorithm}

\subsection{Algorithm Analysis}
As mentioned earlier, the node selection is the most critical step and almost determines the performance of the entire framework.
Thus, here we spend effort analyzing this step to better understand its important properties.
Although some (potentially) sub-optimal nodes may be selected inevitably, we show that the number of times choosing a sub-optimal node can be bounded logarithmically in Pareto MCTS.
In addition, we want to know whether this anytime algorithm will converge to the optimal solution if enough time is given and whether a good solution can be returned if it is stopped midway.
To answer this question, we show that the searching result of Pareto MCTS  converges to the Pareto optimal choices at a polynomial rate.

\begin{problem}[Node Selection]\label{node_selection}
Consider a node $v$ with $K$ child nodes in a Pareto Monte Carlo search tree.
At decision step $n$, a D-dimensional random reward $\bm{X}_{k, n_k}$ will be returned after selecting child $v_k$.
Successive selections of child $v_k$ yield rewards $\bm{X}_{k, 1}, \bm{X}_{k, 2}, \dots$, which are drawn from an unknown distribution with unknown expected reward $\bm{\mu}_k$.
A policy is an algorithm that chooses a child node based on the sequence of past selections and obtained rewards.
The goal of a policy is to minimize the number of times choosing a sub-optimal node.
\end{problem}

In Pareto MCTS, node selection only happens after all child nodes have been expanded.
In other words, there is an initialization step
in which each node has been selected once.
For easy reference, we summarize the node selection policy below.

\begin{policy}\label{policy}
Given a node selection problem as Problem \ref{node_selection}, choose each child node once in the first $K$ steps.
After that, build an approximate Pareto optimal node set based on the following upper confidence bound vector:
\begin{equation}\label{eq:PUCB}
    \bm{U}(k) = \bar{\bm{X}}_{k, n_k} + \sqrt{\frac{4\ln{n} + \ln{D}}{2 n_k}},
\end{equation}
\end{policy}
\noindent where $K$ is the number of child nodes, $n_k$ is the number of times child $k$ has been selected so far, $\bar{\bm{X}}_{k, n_k}$ is the average reward obtained from child $k$, $D$ is the number of dimensions of the reward, and $n = \sum^K_{k=1} n_k$.

In the following proof, we shall use the concept of most dominant optimal node originated from the $\epsilon$-dominance concept~\cite{kollat2008new} of multi-objective optimization.
Intuitively, the most dominant optimal node of a given node is the one in the (estimated) Pareto optimal set which is the ``farthest away'' from the given node.
\begin{definition}[Most Dominant Optimal Node]
Given a node $v_k$ and a node set $\mathcal{V}$ such that $\forall v_{k'} \in \mathcal{V}, v_{k'} \succ v_{k}$.
For all $v_{k'} \in \mathcal{V}$, there exists exactly one minimum positive constant $\epsilon_{k'}$ such that
$$\epsilon_{k'} = \min \{\epsilon | \exists d \in \{1, 2, \dots, D\} \text{ s.t. } \mu'_{k,d} + \epsilon > \mu_{k,d}\}.$$

Let the index of the maximum $\epsilon_{k'}$ be $k^*$,
$$k^* = \argmax_{k'} \epsilon_{k'},$$
then the most dominant optimal node is $v_{k^*}$.
\end{definition}
Throughout the paper, symbols related to the most dominant optimal node will be indexed by a star($^*$).
As in \cite{kocsis2006bandit}, we allow the expected average rewards to drift as a function of time and our main assumption is that it will converge pointwise.
Here we introduce two assumptions so that the later proof can exploit.

\begin{assumption}[Convergence of Expected Average Rewards]\label{convergence_expected_average}
The expectations of the average rewards $\mathbb{E}[\bar{\bm{X}}_{k, n_k}]$ converge pointwise to the limit $\bm{\mu}_{k}$ for all child nodes:
\begin{equation}
    \bm{\mu}_{k} = \lim_{n_k \rightarrow \infty} \mathbb{E}[\bar{\bm{X}}_{k, n_k}].
\end{equation}
\end{assumption}

For a sub-optimal node $v_k$ and its most dominant optimal node $v_{k^*}$ from $\mathcal{P^*}$, we define $\bm{\Delta}_k = \bm{\mu}^* - \bm{\mu}_k$ to denote their difference. 
\begin{assumption}\label{filtration_assumption}
Fix $1\leq k \leq K$ and $1\leq d \leq D$. Let $\{\mathcal{F}_{k, t, d}\}_t$ be a filtration such that $\{X_{k, t, d}\}_t$ is $\{\mathcal{F}_{k, t, d}\}$-adapted and $X_{k, t, d}$ is conditionally independent of $\mathcal{F}_{k, t+1, d}, \mathcal{F}_{k, t+2, d}, \dots$ given $\mathcal{F}_{k, t-1, d}$.
\end{assumption}

For the sake of simplifying the notation, we define $\bm{\mu}_{k, n_k} = \mathbb{E}[\bar{\bm{X}}_{k, n_k}]$ and $\bm{\delta}_{k, n_k} = \bm{\mu}_{k, n_k} - \bm{\mu}_{k}$ as the residual for the drift.
Clearly, $\lim_{n_k \rightarrow \infty} \bm{\delta}_{k, n_k} = \bm{0}$.
By definition, $\forall \xi > 0$, $\exists N_0(\xi)$ such that $\forall n_k \geq N_0(\xi)$, $\forall d \in \{1, 2, \dots, D\}$, $|\delta_{k, n_k, d}| \leq \xi \Delta_{k, d} / 2$.
We present the first theorem below.

\begin{theorem}\label{suboptimal_bound}
Consider Policy \ref{policy} applied to the node selection Problem \ref{node_selection}.
Suppose Assumption \ref{convergence_expected_average} is satisfied.
Let $T_k(n)$ denote the number of times child node $v_k$ has been selected in the first $n$ steps.
If child node $v_k$ is a sub-optimal node (i.e. $v_k \not \in \mathcal{P}^*$), then $\mathbb{E}[T_k(n)]$ is logarithmically bounded:
\begin{equation}
    \mathbb{E}[T_k(n)] \leq \frac{8 \ln{n} + 2 \ln{D}}{(1-\xi)^2 (\min\limits_{k, d}\Delta_{k,d})^2} + N_0(\xi) + 1 + \frac{\pi^2}{3}.
\end{equation}
\end{theorem}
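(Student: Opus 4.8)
The plan is to adapt the classical logarithmic-regret analysis of UCB1, in its drifting-reward form, to the Pareto setting; the one genuinely new ingredient is that the $\ln D$ term in the radius of Eq.~\eqref{eq:PUCB} is calibrated to absorb a union bound over the $D$ objectives. First I would translate the event ``$v_k$ is selected'' into a per-dimension statement. Since a sub-optimal node is chosen only if it lies in the approximate Pareto optimal set, it must in particular be non-dominated by its most dominant optimal node $v_{k^*}$, i.e.\ $\bm{U}(k) \nprec \bm{U}(k^*)$; by the definition of non-domination this means $U_d(k) > U_d(k^*)$ for some coordinate $d$. Hence $\mathbf{1}\{v_k \text{ selected at step } t\} \le \sum_{d=1}^{D}\mathbf{1}\{U_d(k) > U_d(k^*)\}$, which is the bridge from the Pareto criterion to $D$ scalar UCB comparisons.

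Next I would peel off an initial block of selections. For any threshold $m$ one has $T_k(n) \le m + \sum_{t=K+1}^{n}\mathbf{1}\{v_k \text{ selected at } t,\, T_k(t-1)\ge m\}$; I would take $m = N_0(\xi)+\ell$ with $\ell = \left\lceil (8\ln n + 2\ln D)/[(1-\xi)^2(\min_{k,d}\Delta_{k,d})^2] \right\rceil$, so that the $N_0(\xi)$ burn-in (from Assumption~\ref{convergence_expected_average}) forces $|\delta_{k,n_k,d}| \le \xi\Delta_{k,d}/2$ in the retained terms. The heart of the argument is then a trichotomy for each fixed $d$: with $c_{t,s} = \sqrt{(4\ln t + \ln D)/(2s)}$ and the empirical means centered at the drifted expectations $\mu_{k,n_k,d}$, the event $U_d(k) > U_d(k^*)$ forces at least one of (a) the optimal node being underestimated, $\bar{X}_{k^*,n_{k^*},d} \le \mu_{k^*,n_{k^*},d}-c_{t,n_{k^*}}$; (b) the sub-optimal node being overestimated, $\bar{X}_{k,n_k,d}\ge \mu_{k,n_k,d}+c_{t,n_k}$; or (c) the radius being too large, $\mu_{k^*,n_{k^*},d}-\mu_{k,n_k,d} < 2c_{t,n_k}$. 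Because the residual bound makes the drifted gap at least $(1-\xi)\Delta_{k,d}$, squaring $2c_{t,n_k}\le(1-\xi)\Delta_{k,d}$ shows that the choice of $\ell$ rules out (c) whenever $T_k(t-1)\ge \ell$, which is exactly where the $(1-\xi)^2$ and the $8\ln n + 2\ln D$ in the statement originate.

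It then remains to bound the probabilities of (a) and (b). I would apply a Hoeffding-type tail inequality --- legitimized, for the non-i.i.d.\ drifting sequences, by the filtration/conditional-independence structure of Assumption~\ref{filtration_assumption} --- noting that (a) and (b) are pure concentration statements centered at the drifted means, so no residual enters their probabilities. The radius is calibrated so that each single-coordinate tail is at most $t^{-4}/D$; summing over the $D$ coordinates and over the two visit counts $n_k,n_{k^*}\in\{1,\dots,t\}$ produces a factor $D\cdot t\cdot t$ that cancels the $1/D$ and leaves at most $2t^{-2}$ per step $t$. Summing $\sum_{t\ge 1}2t^{-2}=\pi^2/3$, and collecting the burn-in $N_0(\xi)$, the threshold $\ell \le (8\ln n + 2\ln D)/[(1-\xi)^2(\min_{k,d}\Delta_{k,d})^2] + 1$ (the $+1$ from the ceiling), and this tail reproduces the claimed bound on $\mathbb{E}[T_k(n)]$.

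I expect the main obstacle to be the careful handling of the drift on the \emph{optimal} node. The peeling controls $n_k$, but event (c) additionally needs $n_{k^*}\ge N_0(\xi)$ for the gap $\mu_{k^*,n_{k^*},d}-\mu_{k,n_k,d}\ge(1-\xi)\Delta_{k,d}$ to hold, and the count $n_{k^*}$ is not directly bounded below by the argument above; the regime $n_{k^*}<N_0(\xi)$ must therefore be accounted for separately, and the additive $N_0(\xi)$ in the statement reflects the cost of this burn-in. A secondary technical point is verifying that the Hoeffding--Azuma bound applies legitimately to the drifting, conditionally-independent rewards of Assumption~\ref{filtration_assumption} after normalizing the reward to a bounded range.
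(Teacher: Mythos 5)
Your proposal is correct and follows essentially the same route as the paper's Appendix~A proof: the same peeling of $T_k(n)$ past a threshold $l$ of order $(8\ln n + 2\ln D)/[(1-\xi)^2\min_{k,d}\Delta_{k,d}^2]$, the same trichotomy (optimal underestimated / sub-optimal overestimated / radius too large), the same $\ln D$-calibrated Chernoff--Hoeffding-plus-union-bound giving per-event probability $t^{-4}$, and the same $\sum_t 2t^{-2}=\pi^2/3$ tail. The obstacle you flag about the drift on the optimal node's visit count is real and is in fact glossed over in the paper's own argument, so your treatment is if anything slightly more careful.
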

\begin{proof}
Proof is provided in Appendix A.
\end{proof}

The following lemma gives a lower bound on the number of times each child node being selected.
\begin{lemma}\label{lower_bound}
There exists positive constant $\rho$ such that $\forall k, n, T_k(n) \geq \ceil{\rho \log(n)}$.
\end{lemma}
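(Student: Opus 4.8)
The plan is to adapt the logarithmic lower-bound argument of Kocsis and Szepesv\'ari~\cite{kocsis2006bandit} to the Pareto selection rule, exploiting the fact that the exploration bonus in Eq.~\eqref{eq:PUCB} blows up whenever a child is under-sampled. First I would fix the normalization that the per-dimension rewards lie in a bounded range, say $\bar{X}_{k, n_k, d} \in [0, R_{\max}]$ for all $k$ and $d$; this is implicit in the MCTS setting since each rollout returns a bounded reward vector. Writing $b_k = \sqrt{(4\ln n + \ln D)/(2 n_k)}$ for the bonus and taking the logarithm base to be $e$ so that $\ln n = \log n$, the intuition is geometric: if a child $v_k$ is visited only $n_k$ times with $n_k$ small relative to $n$, then $b_k$ is large \emph{in every coordinate}, so the vector $\bm{U}(k)$ is pushed out so far that no competitor can dominate it, forcing $v_k$ into the approximate Pareto optimal set $v.\mathcal{P}$ and hence making it a candidate for selection.

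Concretely, I would argue by induction on $n$. The base case is supplied by the initialization phase of Policy~\ref{policy}, where every child is selected once, so $T_k(K) \ge 1 \ge \ceil{\rho \log K}$ once $\rho \le 1/\log K$. For the inductive step I would use that the threshold $m_n = \ceil{\rho \log n}$ is non-decreasing and increments by at most one per step, so the bound can only be threatened at the discrete times when $m_n$ increases; at such a time a least-visited child $v_k$ has $n_k = m_n - 1 \le \rho \log n$, whence $U_d(k) \ge b_k \ge \sqrt{2/\rho}$ for every dimension $d$. The key computation is that a child $j$ with substantially larger visit count cannot dominate $v_k$, because
\[
U_d(j) = \bar{X}_{j, n_j, d} + b_j \le R_{\max} + b_j < b_k \le U_d(k)
\]
holds in some coordinate as soon as $b_k - b_j > R_{\max}$, i.e.\ as soon as $n_k$ is sufficiently smaller than $n_j$ (which is guaranteed by choosing $\rho$ small enough that $\sqrt{2/\rho} > 2R_{\max}$). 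Consequently only children whose counts are comparable to the minimum can dominate $v_k$, and among all minimal-count children at least one is non-dominated and therefore lies in $v.\mathcal{P}$.

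The hard part will be reconciling this with the \emph{randomized} tie-breaking in \texttt{ParetoBestChild}, where the winner is drawn uniformly from $v.\mathcal{P}$: membership in the Pareto optimal set only yields a positive selection probability, and near the threshold several children may share the minimum count and sit in $v.\mathcal{P}$ simultaneously, so I cannot force the least-visited node to be chosen on that exact step. To close this gap I would observe that a minimally-played, non-dominated child remains in $v.\mathcal{P}$ until it is selected, and since $|v.\mathcal{P}| \le K$ it is drawn with probability at least $1/K$ at every such step; a standard expected-waiting-time (or Borel--Cantelli) argument then shows it cannot lag more than a constant factor behind $\rho \log n$ almost surely, which recovers the claimed bound after absorbing the factor into $\rho$. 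Alternatively, under the deterministic argmax-style reading of selection used in~\cite{kocsis2006bandit} the lag vanishes and the induction closes directly. The remaining details---calibrating $\rho$ jointly against $R_{\max}$, $D$, and $K$, and restricting the statement to $n \ge K$ so the base case and induction are mutually consistent---are routine once the boundedness constant $R_{\max}$ is fixed.
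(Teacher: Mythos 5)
The paper does not actually prove this lemma: it only remarks that ``correctness \dots is provided in~\cite{kocsis2006bandit}'', i.e., it defers to the single-objective UCT analysis, which establishes the analogous bound for a deterministic argmax selection rule rather than for the Pareto rule used here. So you are supplying an argument where the paper supplies none, and your core geometric idea --- that an under-visited child's exploration bonus inflates every coordinate of $\bm{U}(k)$ until no competitor with substantially more visits can dominate it, forcing it into $v.\mathcal{P}$ --- is exactly the right way to transplant that analysis to the Pareto setting, and your calibration $b_k - b_j > R_{\max}$ is the correct quantitative condition.

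Two genuine gaps remain. First, you need membership in $v.\mathcal{P}$ for the \emph{specific} child $v_k$ in the statement, but your domination argument only excludes competitors whose counts are a constant factor larger; a peer $v_j$ with the \emph{same} minimal count carries the same bonus, so domination between them is decided by the empirical means alone, and $v_k$ may well be dominated and ejected from $v.\mathcal{P}$. Your sentence ``among all minimal-count children at least one is non-dominated'' concedes this but then proves the bound only for \emph{some} child, not for every $k$ as the lemma requires. The repair is to observe that a blocking peer needs only $O(\log n)$ additional visits before its bonus falls below $b_k - R_{\max}$, at which point $v_k$ strictly exceeds it in every coordinate; summing over at most $K-1$ blockers gives a delay that is again $O(\log n)$ and can be absorbed into $\rho$. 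Second, and more fundamentally, under uniform random drawing from $v.\mathcal{P}$ the conclusion cannot hold deterministically for all $n$ as stated: with positive probability $v_k$ sits in $v.\mathcal{P}$ and is simply not drawn for an arbitrarily long stretch. Your Borel--Cantelli waiting-time argument yields an almost-sure (or high-probability) version with a path-dependent constant $\rho$, which is the honest form of the lemma; the proof of Theorem~\ref{thm:optimality} then has to carry the corresponding failure event rather than invoking the lower bound as if it were deterministic. That looseness is inherited from the cited reference and is not something your argument introduces, but it should be stated explicitly rather than absorbed silently into~$\rho$.
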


The upcoming lemma states that the average reward will concentrate around its expectation after enough node selection steps.
\begin{lemma}[Tail Inequality]\label{tail_inequality}
Fix arbitrary $\eta > 0$ and let $\sigma = 9\sqrt{\frac{2\ln(2/\eta)}{n}}$.
There exists $N_1(\eta)$ such that $\forall n \geq N_1(\eta), \forall d \in \{1, \dots, D\}$, the following bounds hold true:
\begin{align}
    \mathbb{P}(\bar{X}_{n,d} \geq \mathbb{E}[\bar{X}_{n,d}] + \sigma) \leq \eta,\\
    \mathbb{P}(\bar{X}_{n,d} \leq \mathbb{E}[\bar{X}_{n,d}] - \sigma) \leq \eta.
\end{align}
\end{lemma}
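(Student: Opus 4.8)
The plan is to reduce the vector statement to a one-dimensional concentration inequality and then establish that inequality for the \emph{dependent}, \emph{drifting} reward sequence using the filtration structure of Assumption~\ref{filtration_assumption}. Since $D$ is finite, it suffices to prove each bound for a fixed coordinate $d$ and a fixed child node (so I drop the node index) and then set $N_1(\eta)=\max_d N_1^{(d)}(\eta)$; no simultaneous (union-bounded) control over $d$ is required, because the claim asserts each per-coordinate tail probability separately. The central observation is that the deviation in question is centered at the \emph{exact} finite-$n$ mean,
\begin{equation*}
\bar{X}_{n,d}-\mathbb{E}[\bar{X}_{n,d}]=\frac{1}{n}\sum_{t=1}^{n}\bigl(X_{t,d}-\mathbb{E}[X_{t,d}]\bigr),
\end{equation*}
so the drift residual $\delta_{n,d}=\mu_{n,d}-\mu_{d}$ does \emph{not} enter the deviation itself; the convergence in Assumption~\ref{convergence_expected_average} will instead be used only to guarantee that $n$ is large enough ($n\ge N_1(\eta)$) for the approximation steps below to be valid.

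Next I would turn the centered sum into a martingale so that a Hoeffding--Azuma argument applies. Writing $Y_t = X_{t,d}-\mathbb{E}[X_{t,d}\mid\mathcal{F}_{t-1,d}]$ yields a bounded martingale-difference sequence with respect to $\{\mathcal{F}_{t,d}\}$, and the leftover term $\sum_t(\mathbb{E}[X_{t,d}\mid\mathcal{F}_{t-1,d}]-\mathbb{E}[X_{t,d}])$ is exactly what the conditional-independence hypothesis of Assumption~\ref{filtration_assumption} is designed to control (this is the device used in \cite{kocsis2006bandit} for averages produced by a search process whose samples are not i.i.d.). The key steps are then: (i) apply the Hoeffding--Azuma inequality to $\sum_{t=1}^{n}Y_t$, using the boundedness of the per-step rewards (say with range $R$), to obtain a sub-Gaussian tail of the form $\exp(-2n\epsilon^2/R^2)$; (ii) equate this tail to $\eta$ and invert, giving a threshold $\epsilon \propto R\sqrt{\ln(1/\eta)/n}$; and (iii) absorb the reward range together with the factor arising from symmetrizing the two tails into the stated constants, i.e.\ into the leading $9$ and into the $\ln(2/\eta)$ (the $2$ inside the logarithm coming from handling the upper and lower tails together). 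The lower-tail bound then follows by the identical argument applied to $-X_{t,d}$.

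I expect the main obstacle to be carrying out steps (i)--(ii) \emph{rigorously for the non-i.i.d.\ sequence}: verifying that the exact conditional-independence statement of Assumption~\ref{filtration_assumption} legitimately licenses a Hoeffding--Azuma-type bound for $\bar{X}_{n,d}-\mathbb{E}[\bar{X}_{n,d}]$, and in particular pinning the constant down to land on exactly $9$ rather than on some larger universal constant. A secondary technical point is the precise choice of $N_1(\eta)$: it must be large enough that any lower-order correction terms left over from the martingale decomposition (and any slack inherited from the convergence in Assumption~\ref{convergence_expected_average}) are dominated by $\sigma=9\sqrt{2\ln(2/\eta)/n}$, which is why the bound is only asserted for $n\ge N_1(\eta)$. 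Rather than re-deriving the constant from scratch, I would most likely invoke the drift-condition verification of Kocsis and Szepesv\'ari \cite{kocsis2006bandit}, whose assumptions mirror our Assumptions~\ref{convergence_expected_average} and~\ref{filtration_assumption} almost verbatim, and specialize their tail inequality to the present notation.
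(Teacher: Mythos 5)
The paper does not actually prove this lemma: it only states that correctness is established in \cite{kocsis2006bandit} (it is Theorem~4 there, restated with $\sigma = \Delta_n/n$), and your proposal likewise ends by deferring to that same reference, so the two approaches coincide. One caveat worth recording is that your direct Hoeffding--Azuma sketch on the full centered sum would not by itself yield the constant $9$ --- in \cite{kocsis2006bandit} that constant comes from decomposing $\bar{X}_n$ over the children with the random weights $T_k(n)/n$ and invoking the logarithmic lower bound of Lemma~\ref{lower_bound} --- but since you explicitly fall back on their tail inequality for precisely that step, your argument is consistent with what the paper relies on.
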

Correctness of Lemma \ref{lower_bound} and Lemma \ref{tail_inequality} is provided in~\cite{kocsis2006bandit}.

\begin{theorem}[Convergence of Failure Probability]\label{thm:optimality}
Consider the node selection policy described in Algorithm \ref{alg:pareto_best_child} applied to the root node.
Let $I_t$ be the selected child node and $\mathcal{P}^*$ be the Pareto optimal node set.
Then,
\begin{equation}
    \mathbb{P}(I_t \not \in \mathcal{P}^*) \leq Ct^{-\frac{\rho}{2}\left(\frac{\min\limits_{k, d} \Delta_{k, d}}{36}\right)^2},
\end{equation}
with some constant $C$.
In particular, it holds that $\lim_{t\rightarrow \infty} \mathbb{P}(I_t\not \in \mathcal{P}^*) = 0$
\end{theorem}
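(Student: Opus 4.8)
\emph{Proof plan.} I would adapt the polynomial failure-rate argument for UCT~\cite{kocsis2006bandit} to the vector-valued setting, converting a domination statement into scalar comparisons by means of the most dominant optimal node. First I would reduce the failure event. A suboptimal node $v_k$ can be recommended only if it survives as non-dominated against every competing node, and in particular against its most dominant optimal node $v_{k^*}$. By the definition of $\nprec$, the relation $v_k \nprec v_{k^*}$ forces at least one coordinate in which $v_k$ does not fall below $v_{k^*}$, so
\begin{equation*}
\{I_t \notin \mathcal{P}^*\} \subseteq \bigcup_{k \notin \mathcal{P}^*} \bigcup_{d=1}^{D} \left\{ \bar{X}_{k, T_k(t), d} \geq \bar{X}_{k^*, T_{k^*}(t), d} \right\}.
\end{equation*}
Because Lemma~\ref{lower_bound} guarantees every node is selected $\Omega(\log t)$ times, the exploration bonus of Eq.~\eqref{eq:PUCB} on the optimal nodes vanishes, so the recommendation is asymptotically decided by the sample means; justifying this reduction (that $I_t$ is governed by empirical domination rather than by the inflated UCB vectors) is the first point requiring care.

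Next I would handle a single term of the union. Passing from the limits $\bm{\mu}$ to the drifting expectations $\mathbb{E}[\bar{X}_{k,T_k(t),d}] = \mu_{k,d} + \delta_{k,T_k(t),d}$ and invoking the residual bound $|\delta_{k,n_k,d}| \leq \xi \Delta_{k,d}/2$ (valid once $T_k(t) \geq N_0(\xi)$, by Assumption~\ref{convergence_expected_average}), the comparison event forces one of the two sample means to deviate from its own expectation by at least $(1-\xi)\Delta_{k,d}/2$. Fixing $\xi = 1/2$ turns this threshold into $\Delta_{k,d}/4$, which is exactly what produces the constant $36$ below.

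I would then apply the tail inequality of Lemma~\ref{tail_inequality} (applicable thanks to Assumption~\ref{filtration_assumption}) to each deviation. Setting $\sigma = 9\sqrt{2\ln(2/\eta)/n}$ equal to the threshold $\Delta_{k,d}/4$ gives $\sqrt{2\ln(2/\eta)/n} = \Delta_{k,d}/36$, hence each deviation has probability at most $2\exp\!\big(-\tfrac{n}{2}(\Delta_{k,d}/36)^2\big)$. Substituting the guaranteed count $n = T_k(t) \geq \rho\log t$ from Lemma~\ref{lower_bound} converts this exponential-in-$n$ bound into the polynomial factor $t^{-\frac{\rho}{2}(\Delta_{k,d}/36)^2}$. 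Bounding every $\Delta_{k,d}$ below by $\min_{k,d}\Delta_{k,d}$ and summing the at most $KD$ terms absorbs the multiplicity into the constant $C$ and yields the stated inequality; letting $t \to \infty$ gives $\mathbb{P}(I_t \notin \mathcal{P}^*) \to 0$.

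The main obstacle is the bookkeeping that keeps the two approximations compatible: the drift correction needs $T_k(t) \geq N_0(\xi)$, which only holds for $t$ large (since $T_k(t) \geq \rho\log t \to \infty$) and so is swept into $C$, while the tail bound needs the same count $T_k(t) \geq \rho\log t$ carried faithfully through the exponent so that it lands exactly on $\frac{\rho}{2}(\min_{k,d}\Delta_{k,d}/36)^2$. The conceptually subtle step remains the first one --- arguing that the recommended (most-visited) child is, asymptotically, the sample-mean-dominating child, so that the vanishing-bonus reduction of the failure event is legitimate rather than being corrupted by the $\Theta(1)$ exploration bonus that suboptimal nodes retain under selection.
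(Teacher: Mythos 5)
Your proposal follows essentially the same route as the paper's proof: decompose the failure event over sub-optimal children compared against the most dominant optimal node, split the non-domination event into two one-sided deviations of size $\bm{\Delta}_k/2$, absorb the drift via $N_0(\xi)$ with $\xi=1/2$, and convert the tail bound of Lemma~\ref{tail_inequality} into a polynomial rate using the $T_k(t)\geq \rho\log t$ guarantee of Lemma~\ref{lower_bound}, summing the $O(KD)$ terms into $C$. The only notable difference is that you explicitly flag the reduction from the UCB-based selection to empirical-mean domination as needing justification, a step the paper's proof takes for granted.
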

\begin{proof}
Proof is provided in Appendix B.
\end{proof}

Theorem~\ref{thm:optimality} shows that, at the root node, the probability of choosing a child node (and corresponding action) which is not in the Pareto optimal set converges to zero at a polynomial rate as the number of node selection grows.


\begin{figure} [t] \vspace{-15pt} 
    \centering
  \subfloat[\label{fig:exmaple_env}]{%
      \includegraphics[width=0.45\linewidth]{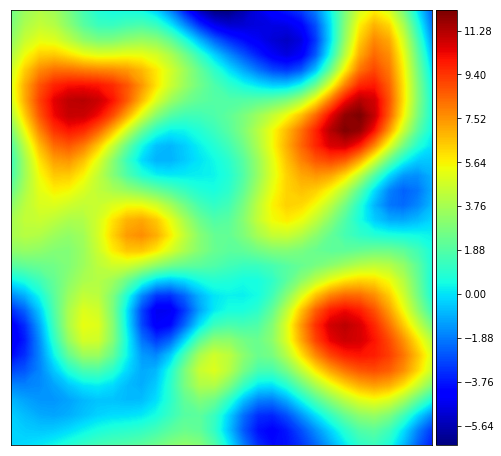}}
      \quad
  \subfloat[\label{fig:example_path}]{%
        \includegraphics[width=0.45\linewidth]{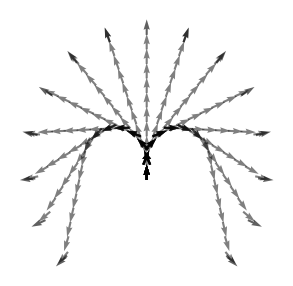}}
    \caption{(a) Environment with hotspots used as ground truth. The heat map represents level of interest. (b) Primitive paths for a robot.\vspace{-15pt} }
    \label{fig:env_path}
\end{figure}

\section{EXPERIMENTS}

To thoroughly evaluate our framework, we have compared different methods in extensive simulations using both synthetic data and real-world data, where the basic scenario is the hotspot monitoring task via informative planning for a robot.
The ideal behavior for the robot is to first explore the environment to discover the hotspots and then exploit these important areas to collect more valuable samples.
This is a bi-objective task although our algorithm is suitable for multi-objective tasks in general.
We choose this scenario for comparison (and illustration) purpose, since one of the comparing methods can only handle bi-objective case. In addition, hotspot monitoring task can be easily visualized for interpretation.


We have compared our algorithm with two other baseline methods.
The first method is the Monte Carlo tree search with information-theoretic objective, which has been successfully applied to planetary exploration mission~\cite{arora2017mcts}, environment exploration~\cite{best2019dec, corah2017efficient}, and monitoring spatiotemporal process~\cite{marchant2014sequential}. We called the method {\em information MCTS}.
The second method is a upper confidence bound based online planner~\cite{sun2017no} which balances exploration and exploitation in a near-optimal manner with appealing no-regret properties.
However, when choosing an optimal trajectory, only the primitive paths of current state are considered in their model.
For comparison, we modify and extend this model to MCTS by using the upper confidence bound (see section \RN{4} of \cite{sun2017no}) as the reward function of MCTS, which is called {\em UCB MCTS}.


\begin{figure*} \vspace{-5pt}  
    \centering
    \subfloat[\label{fig:info_path_syn}]{\includegraphics[height=1.4in]{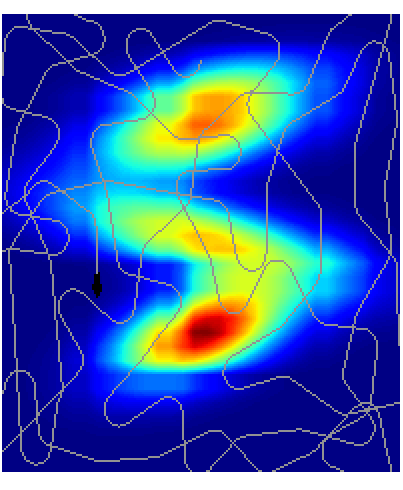}}
    \subfloat[\label{fig:info_mean_syn}]{\includegraphics[height=1.4in]{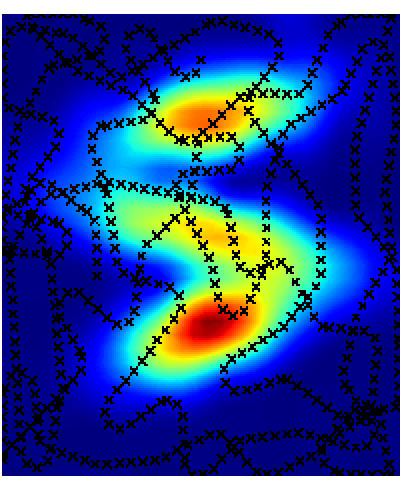}} \  
    \subfloat[\label{fig:ucb_path_syn}]{\includegraphics[height=1.4in]{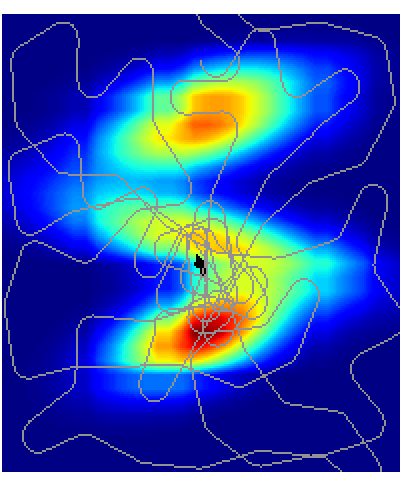}}
    \subfloat[\label{fig:ucb_mean_syn}]{\includegraphics[height=1.4in]{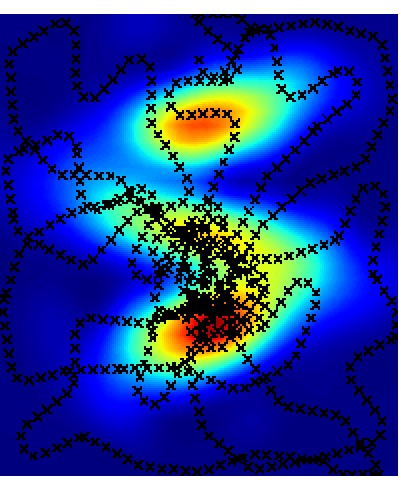}} \ 
    \subfloat[\label{fig:pareto_path_syn}]{\includegraphics[height=1.4in]{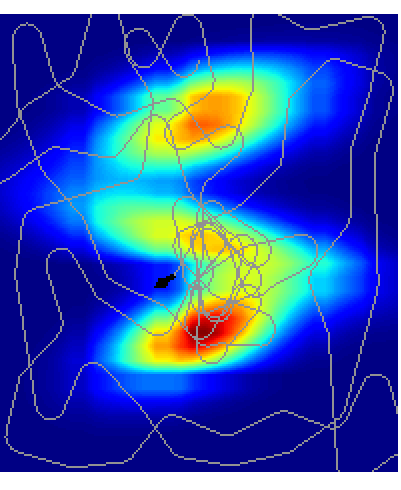}}
    \subfloat[\label{fig:pareto_mean_syn}]{\includegraphics[height=1.4in]{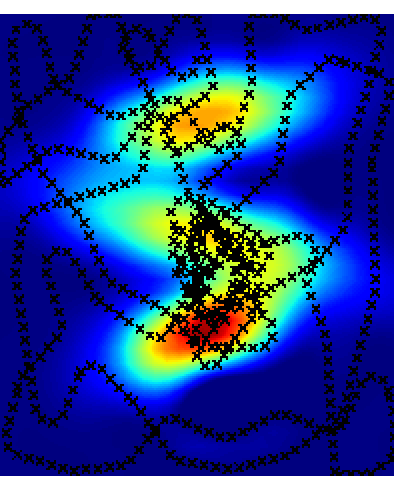}}
    \caption{The arrow represents the robot. (a)(c)(e) The line shows the resulting path of each algorithm with the underlying environment as the background. (b)(d)(f) Robot's estimation of the target value and collected samples. Red represents high value and blue indicates low value. (a)(b) Information MCTS. (c)(d) UCB MCTS. (e)(f) Pareto MCTS.\vspace{-15pt} }
    \label{fig:path_mean_syn}
\end{figure*}

\begin{figure} \vspace{-5pt} 
    \centering
    \subfloat[]{\includegraphics[width=0.45\linewidth]{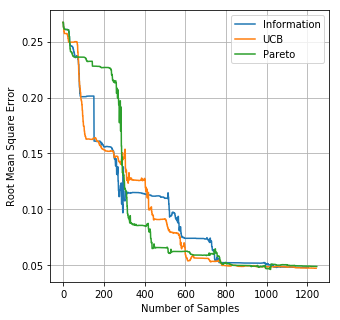}}
    \quad
    \subfloat[]{\includegraphics[width=0.45\linewidth]{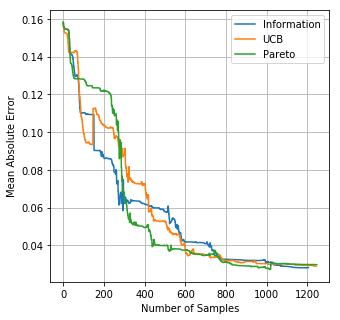}} \vspace{-10pt}
    \subfloat[]{\includegraphics[width=0.45\linewidth]{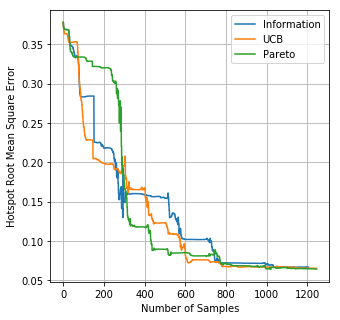}}
    \quad
    \subfloat[]{\includegraphics[width=0.45\linewidth]{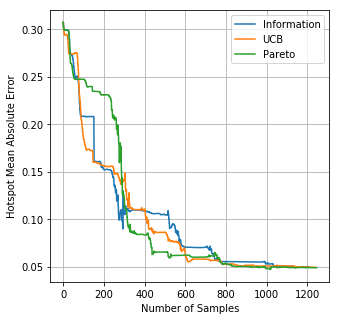}}
    
    \caption{The resulting error of the three algorithm in the synthetic problem. (a) Root mean square error. (b) Mean absolute error. (c) Hotspot root mean square error. (d) Hotspot mean absolute error. \vspace{0pt}
    }
    \label{fig:error_syn}
\end{figure}

\textbf{Rewards: }
As for the reward function, we choose variance reduction as the information-theoretic objective as in~\cite{binney2012branch, hollinger2014sampling}.
The other reward is devised as adding up the predicted values of samples along the path, which encourages the robot to visit high-value areas more frequently.

\textbf{Metrics: }
In the informative planning context, our goal is to minimize the root mean square error (RMSE) between the estimated environmental state (using GP prediction) and the ground truth.
However, in our hotspot monitoring task, we are more concerned with the modeling errors in high-value areas than the entire environment.
Therefore, a {\em hotspot RMSE} is employed for better evaluating the ``exploitation" performance.
We classify the areas with target values higher than the median as hotspots and calculate the RMSE within these hotspots.
In addition, larger errors in unimportant areas are acceptable in this task and RMSE tends to penalize larger errors in a uniform way.
Therefore, we also evaluate the methods using mean absolute error (MAE).
Similarly, we introduced hotspot MAE to highlight algorithms' performance in the important areas.
Last but not least, the {\em percentage of samples} in hotspots measures the quality of the collected data. Ideally, the robot should locate the important areas as soon as possible and gather more valuable samples in these areas.
Also, if there are multiple hotpots, the robot should visit as many hotspots as possible instead of getting stuck in one specific area.


\subsection{Synthetic Problems}
The robot is tasked to monitor several hotspots in  an unknown $10\text{ km }\times 10\text{ km }$ environment, illustrated in Fig.~\ref{fig:exmaple_env}.
The hotspots to be monitored are specified by three Gaussian sources with random placement and parameters.
At each position, the robot has $15$ Dubins paths~\cite{lavalle2006planning} as its primitive paths.
Fig.~\ref{fig:example_path} illustrates an example of available primitive paths.

\begin{figure}[htbp] \vspace{-10pt}
    \centering
  \subfloat[\label{fig:cdom_raw}]{%
        \includegraphics[height=1.3in]{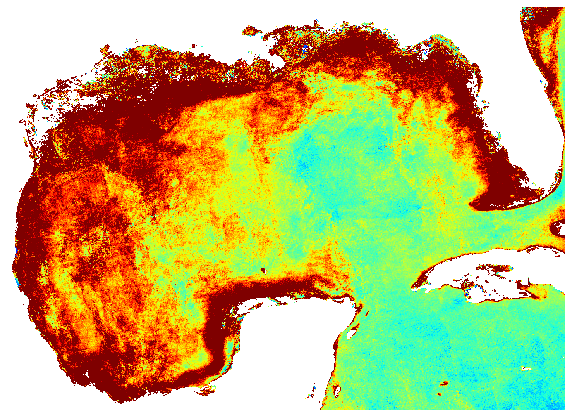}} \quad
  \subfloat[\label{fig:cdom_crop}]{%
      \includegraphics[height=1.3in]{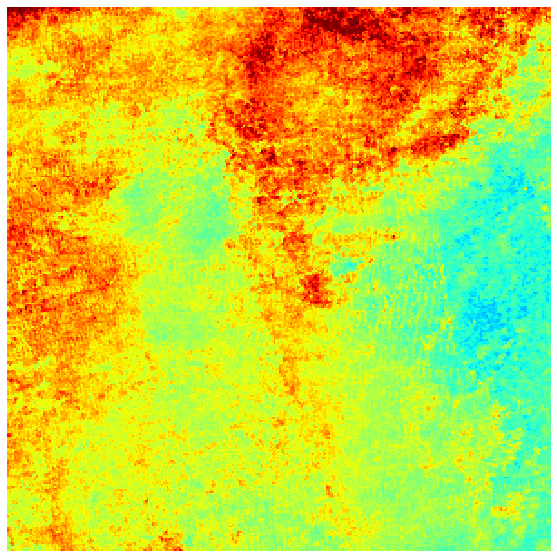}}
    \caption{(a) CDOM raw data of Gulf of Mexico. (b) A cropped region near Louisiana Wetlands. \vspace{-15pt} }
    \label{fig:cdom}
\end{figure}

Fig. \ref{fig:path_mean_syn} shows the the ground truth and prediction with the robot's path and collected samples.
As expected, the information MCTS tends to cover the whole space and collect the samples uniformly, because less information will be gained from a point once it has been observed.
On the contrary, UCB MCTS spent a small amount of effort exploring the environment during the initial phase in order to reduce the uncertainty of the hotspot estimation.
After that, it tends to greedily wander around the high-value areas.
This phenomenon is consistent with the behavior of the UCB algorithm in the multi-armed bandit problem in which the best machine will be played much more times than sub-optimal machines.
We noticed that the bias term in UCB-replanning~\cite{sun2017no} also increases with the mission time, which means that, given enough time, the robot will still try to explore other areas.
However, in our experiments, the task duration could not be set too long due to the scalability of the GP.
Pareto MCTS simultaneously optimize all the objectives.
As a result, more samples can be found in the upper hotspot (see Fig. \ref{fig:pareto_mean_syn} and Fig. \ref{fig:ucb_mean_syn}).

Fig. \ref{fig:error_syn} presents the global error and hotspot error.
The difference between the global error and the hotspot error is negligible because the most important variability is concentrated in the hotspot areas.
We notice that, in the first $200$ samples, the performance of Pareto MCTS is inferior to other two methods.
In fact, this is because Pareto MCTS has not yet discovered any particular hotspot in the initial exploration phase.
Once it finds the a hotspot and starts exploiting that hotspot, the error curve drops drastically, surpassing the other two methods at about $300$ samples.
This property is consistent with our motivation. It is also obvious that our method in general has the steepest error reduction rate, which is particularly important in monitoring highly dynamic environment.

\begin{figure} \vspace{-5pt} 
    \centering
    \subfloat[\label{fig:info_path}]{\includegraphics[width=0.4\linewidth]{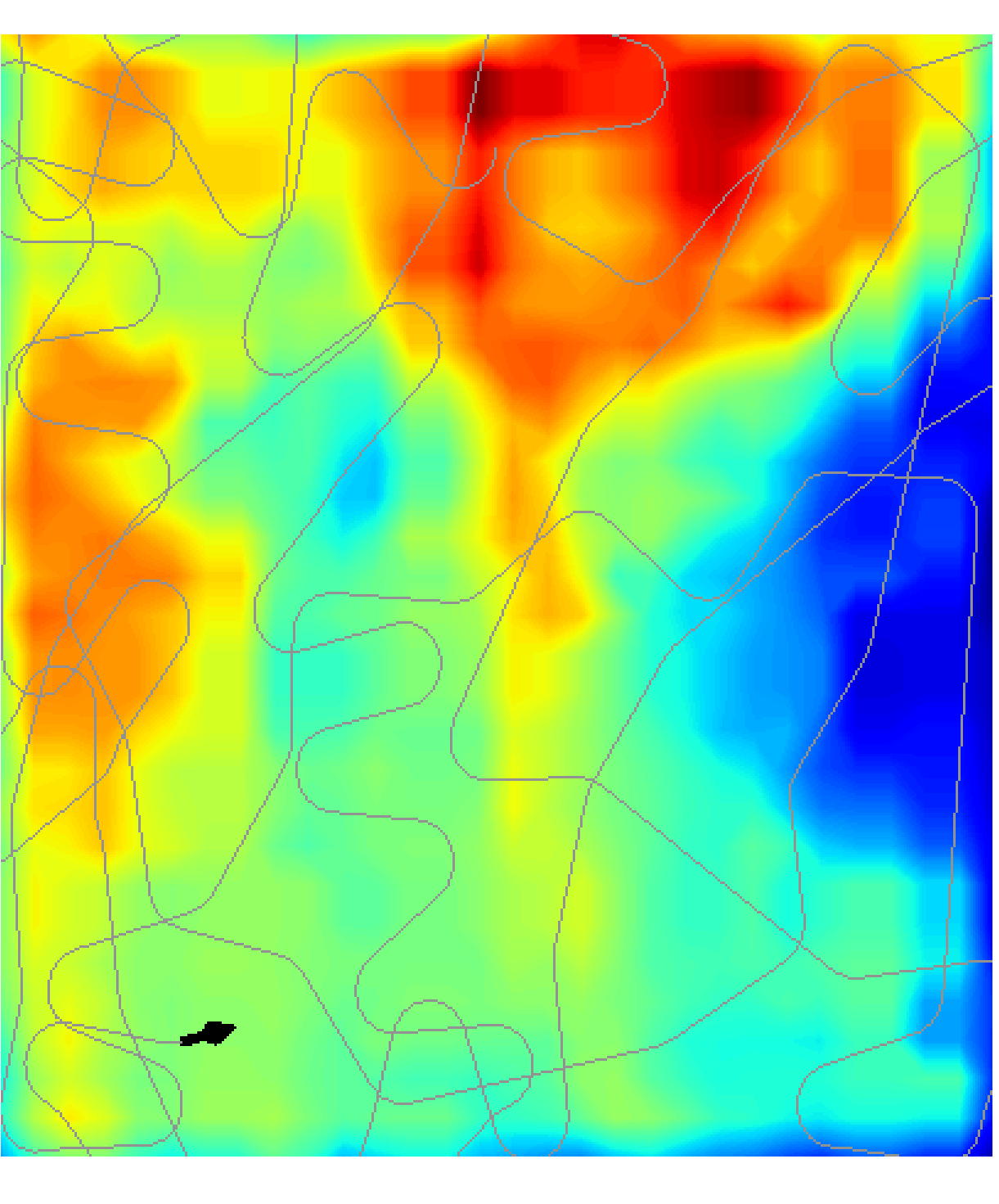}} \quad
    \subfloat[\label{fig:info_mean}]{\includegraphics[width=0.4\linewidth]{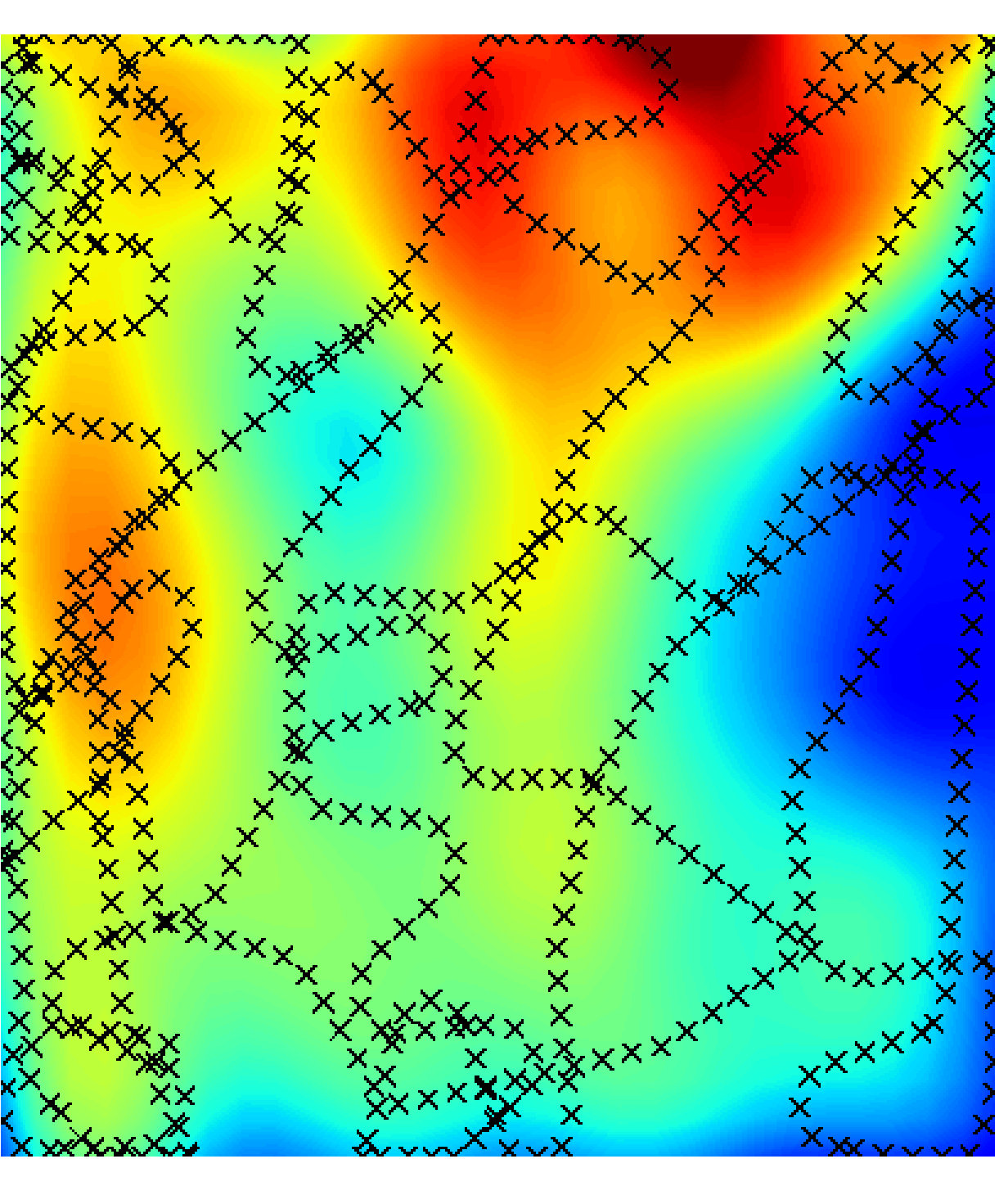}} \vspace{-10pt} \\
    \subfloat[\label{fig:ucb_path}]{\includegraphics[width=0.4\linewidth]{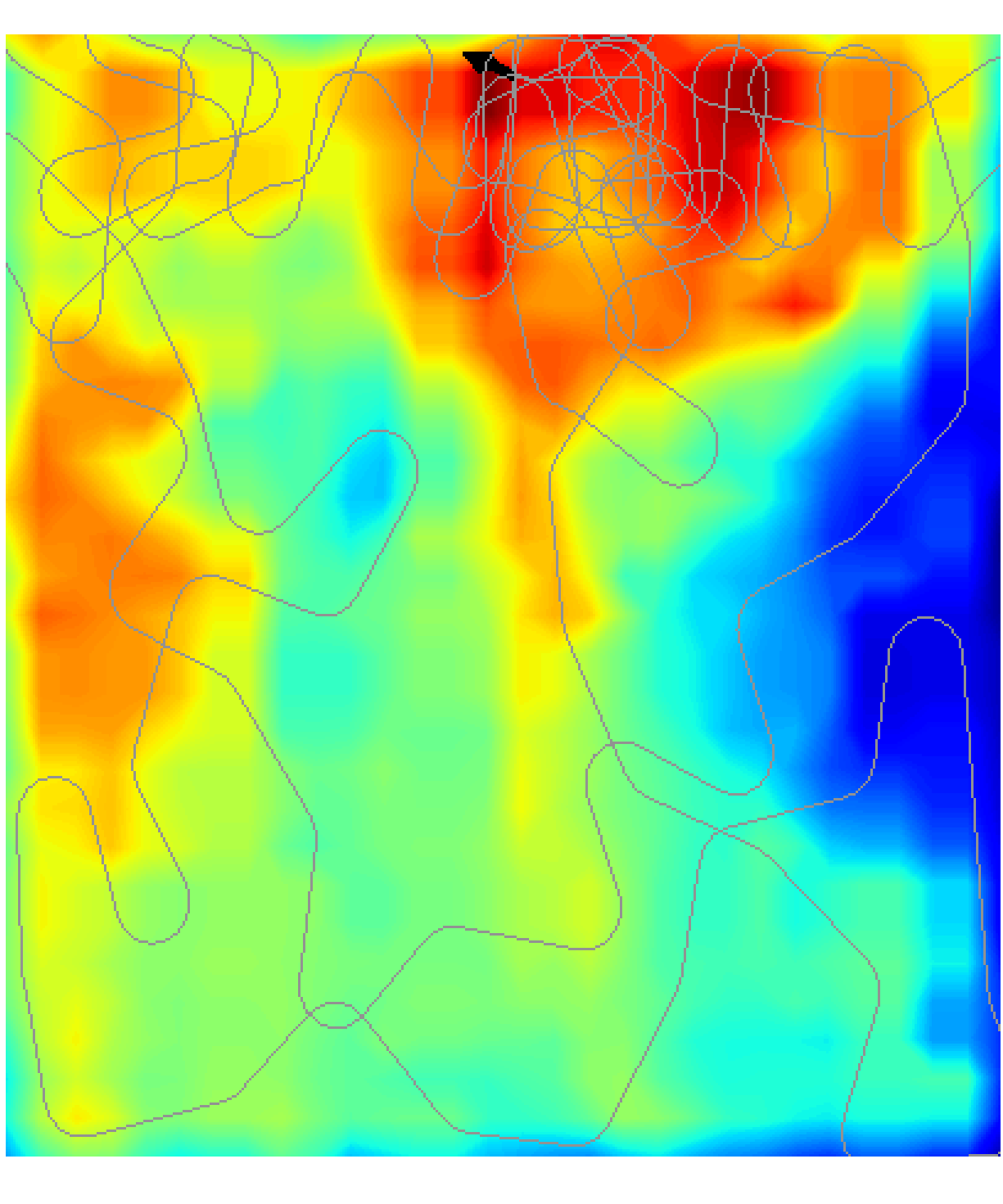}} \quad
    \subfloat[\label{fig:ucb_mean}]{\includegraphics[width=0.4\linewidth]{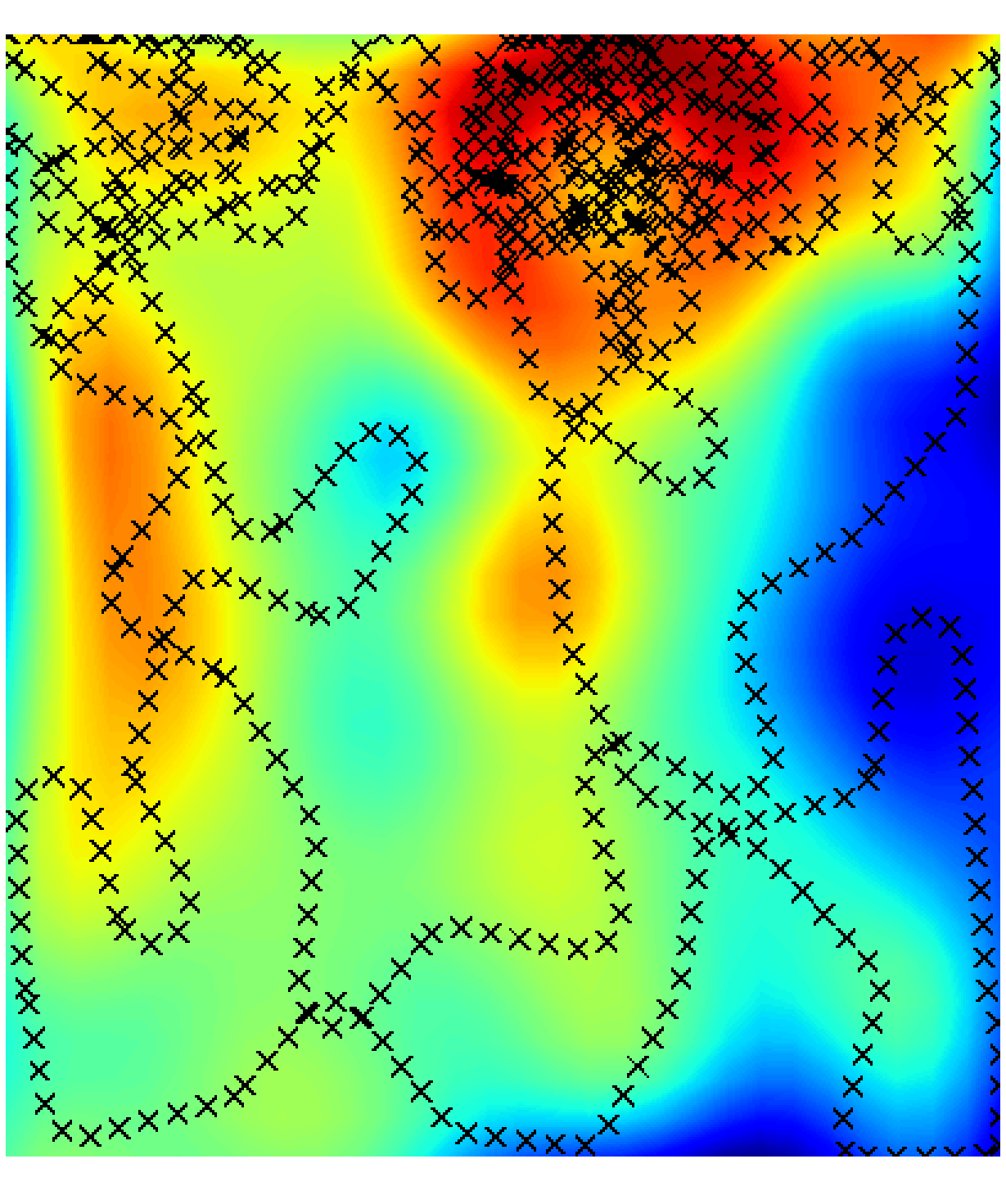}}\vspace{-10pt}\\
    \subfloat[\label{fig:pareto_path}]{\includegraphics[width=0.4\linewidth]{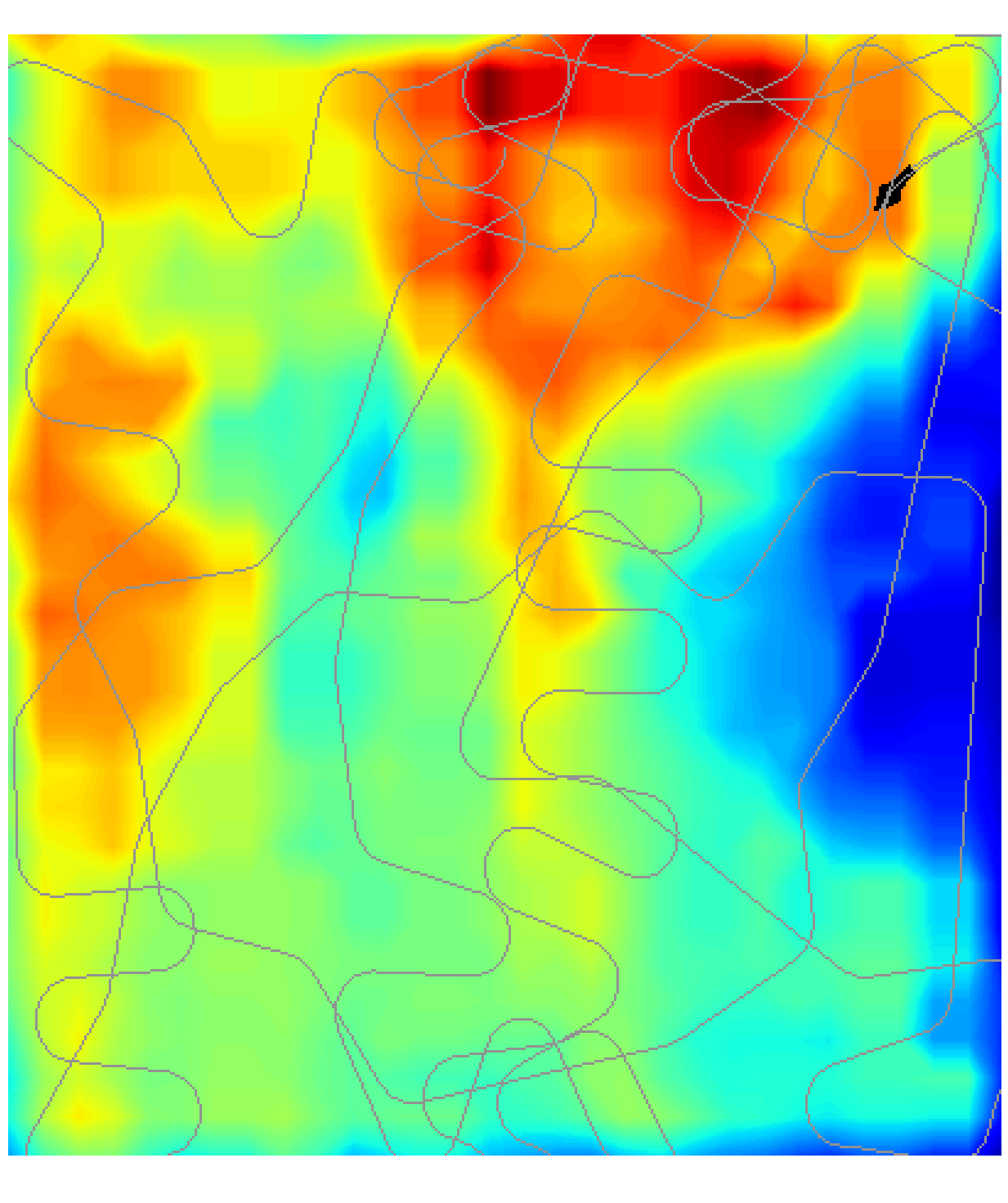}} \quad
    \subfloat[\label{fig:pareto_mean}]{\includegraphics[width=0.4\linewidth]{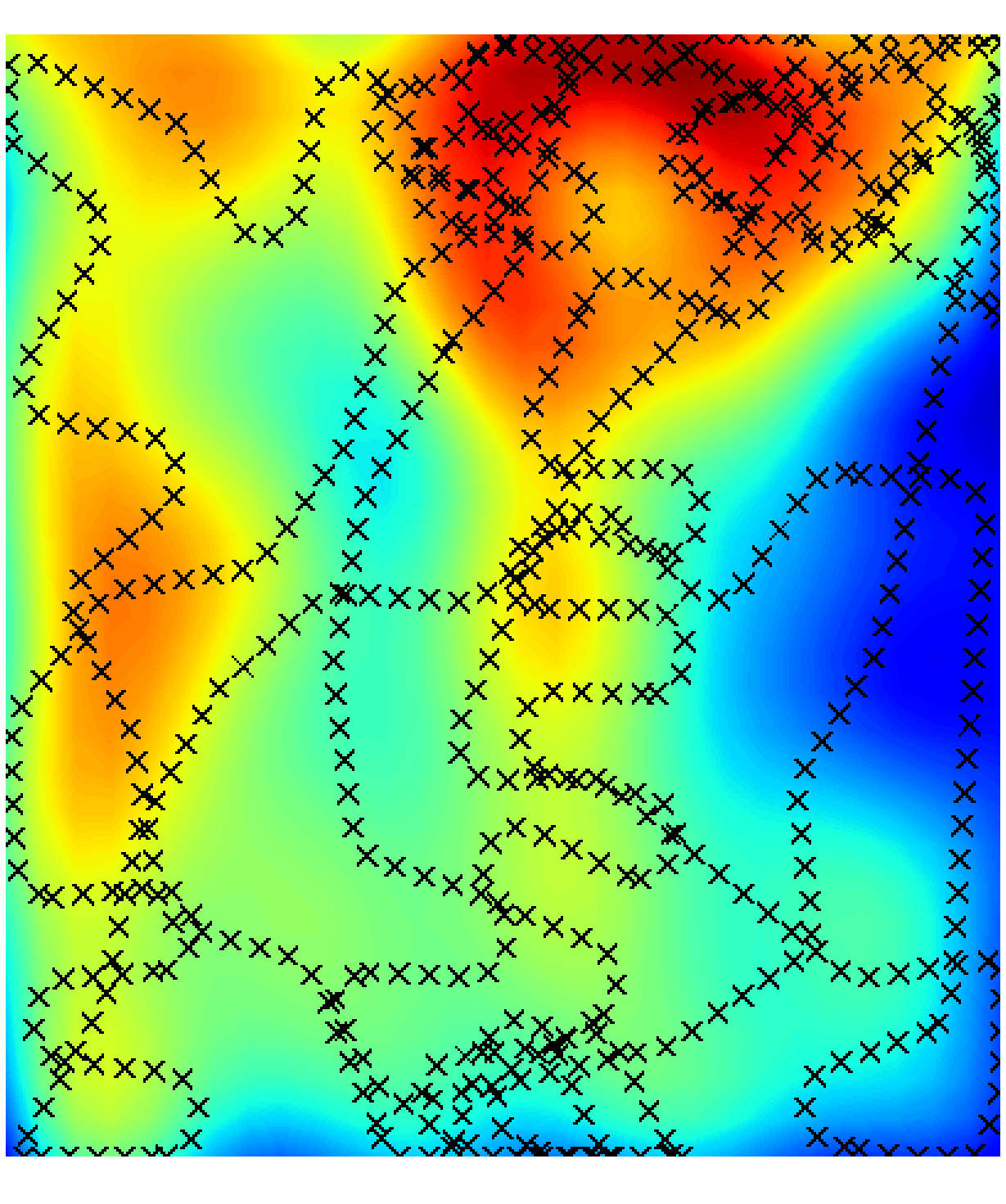}}
    \caption{The resulting path, collected samples, and prediction of each algorithm in the chromophoric dissolved organic material monitoring problem. Blue and red correspond to low and high value, respectively. The arrow represents the robot and the crosses are the sampled data. (a) Path and the ground truth of information MCTS. (b) Collected samples and the estimated hotspot map from information MCTS. (c) Path and the ground truth of UCB MCTS. (d) Collected samples and estimation of UCB MCTS. (e) Path and the ground truth of Pareto MCTS. (f) Collected samples and estimation of Pareto MCTS. \vspace{-15pt}
    }
    \label{fig:result_cdom}
\end{figure}

\subsection{Chromophoric Dissolved Organic Material Monitoring}

We now demonstrate our proposed approach using the chromophoric dissolved organic material (CDOM) data at the Gulf of Mexico, provided by National Oceanic and Atmospheric Administration (NOAA).
The concentration of CDOM has a significant effect on biological activity in aquatic systems.
Very high concentrations of CDOM can affect photosynthesis and inhibit the growth of phytoplanktons.
Fig.~\ref{fig:cdom_raw} is the raw data of CDOM.
We have cropped a smaller region with a higher variability of the target value (Fig. \ref{fig:cdom_crop}).
Due to the scalability issue of the GP, the raw data ($300\times 300$ grids) is down-sampled to ($30\times 30$).


Fig. \ref{fig:result_cdom} reveals similar sample distribution patterns as in the synthetic data.
Specifically, the information MCTS features good spatial coverage.
UCB MCTS prefers to stay at the hotspot with highest target value after a rough exploration of the environment.
Pareto MCTS tends to compromise between hotspot searching and hotspot close examination.
As a result, it exhibits interesting winding paths in some important areas (see Fig. \ref{fig:pareto_mean}).
This allows the robot to collect more samples in those areas without losing too much information gain.

In this experiment, we also show how to incorporate prior knowledge in the Pareto MCTS.
In robotic environmental monitoring, the robot needs to explore the environment extensively in the early stage.
To this end, we always choose the most informative action from the Pareto optimal set at the beginning (first $400$ samples), which makes Pareto MCTS degenerate to information MCTS.
As shown in Fig. \ref{fig:error}, the root mean square error of information MCTS (blue line) and that of Pareto MCTS (green line) visually overlap.
This implies that there is almost no loss in global modeling error.
At the same time, the percentage of samples collected from hotspots has increased.
Fig. \ref{fig:hotspot_percent} shows the percentage of hotspot samples.
These results validate the benefits of multi-objective informative planning and Pareto MCTS.

\begin{figure}\vspace{0pt} 
    \centering
    \subfloat[]{\includegraphics[width=0.45\linewidth]{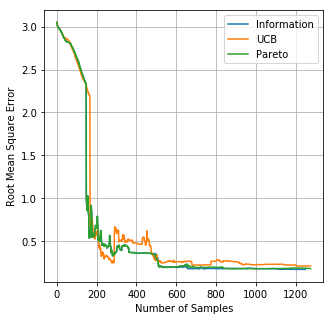}}
    \quad 
    \subfloat[]{\includegraphics[width=0.45\linewidth]{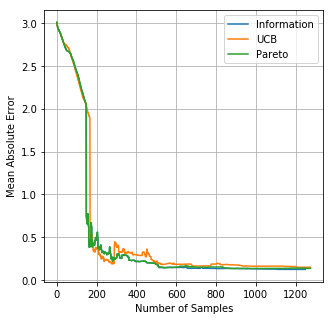}}
    
    \caption{The root mean square error and mean absolute error of the three algorithms in the chromophoric dissolved organic material monitoring problem. (a) Root mean square error. (b) Mean absolute error. The blue line is visually overlapped with the green line. They are separated after zooming in, but the difference is negligible. 
    }
    \label{fig:error}
\end{figure}

\begin{figure} \vspace{0pt} 
    \centering
    {\includegraphics[width=0.9\linewidth, height=3cm]{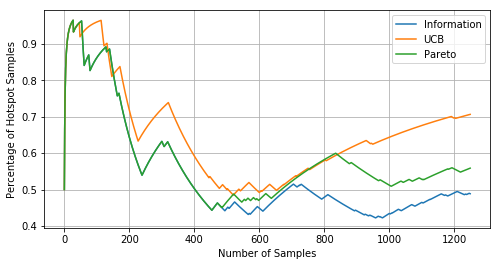}}
    \caption{Percentage of hotspot samples of the three algorithms in the chromophoric dissolved organic material monitoring problem.
    \vspace{-15pt}
    } \label{fig:hotspot_percent}
\end{figure}


\section{CONCLUSION}
This paper presents a Pareto multi-objective optimization based informative planning approach.
We show that the searching result of Pareto MCTS converges to the Pareto optimal actions at a polynomial rate, and the number of times choosing a sub-optimal node in the course of tree search has a logarithmic bound.
Our method allows the robot to adapt to the target environment and adjust its concentrations on environmental exploration versus exploitation based on its  knowledge (estimation) of the environment.
We validate our approach in a hotspot monitoring task using real-world data and the results reveal that our algorithm enables the robot to explore the environment and, at the same time, visit hotspots of high interests more frequently.


\section*{APPENDIX}
\subsection{Proof to Theorem \ref{suboptimal_bound}}\label{proof_thm_1}
Let variable $I_t$ be the index of the selected child node at decision step $t$ and $\bm{1}\{\cdot\}$ be a Boolean predicate function.
Let the bias term in Eq.~\eqref{eq:PUCB} be $c_{t, s} = \sqrt{\frac{4\ln{t} + \ln{D}}{2s}}$.
Then $\forall l > 0 \text{ and } l \in \mathbb{Z}^+$, for any sub-optimal node $v_k$, we have the upper bound $T_k(n)$.
\begin{align*}
    T_k(n) =& l + \sum^n_{t=K+1} \bm{1} \{I_t = k\}\\
        \leq& l + \sum^n_{t=K+1} \bm{1} \{I_t = k, T_k(t-1) \geq l\}\\
        \leq& l + \sum^n_{t=K+1} \bm{1} \{\bar{\bm{X}}^*_{T_j(t-1)} + c_{t-1, T^*(t-1)} \\
        &\nsucc \bar{\bm{X}}_{k, T_k(t-1)} + c_{t-1, T_k(t-1)}, T_k(t-1) \geq l\}\\
        \leq& l + \sum^\infty_{t=1} \sum^{t-1}_{s=1} \sum^{t-1}_{s_k=l} \bm{1}\{\bar{\bm{X}}^*_s + c_{t, s} \nsucc \bar{\bm{X}}_{k, s_k} + c_{t, s_k}\}\\
\end{align*}
$\bar{\bm{X}}^*_s + c_{t, s} \nsucc \bar{\bm{X}}_{k, s_k} + c_{t, s_k}$ implies that at least one of the following must hold:
\begin{align}
        \bar{\bm{X}}^*_s &\nsucc \bm{\mu}^*_s - c_{t, s}\label{eq:first_tail}\\
        \bar{\bm{X}}_{k, s_k} &\nprec \bm{\mu}_{k, s_k} - c_{t, s}\label{eq:second_tail}\\
        \bm{\mu}^*_s &\nsucc \bm{\mu}_{k, s} + 2c_{t, s_k}\label{eq:impossible}
\end{align}
Otherwise, If Eq.~\eqref{eq:first_tail}, \eqref{eq:second_tail} are false, then Eq.~\eqref{eq:impossible} is true.

We bound the probability of events Eq.~\eqref{eq:first_tail} \eqref{eq:second_tail} using Chernoff-Hoeffding Bound and Union Bound.

\newcommand\unionBoundEq{\mathrel{\overset{\makebox[0pt]{\mbox{\normalfont\tiny\sffamily Union Bound}}}{=}}}
\begin{align*}
    &\mathbb{P}(\bar{\bm{X}}^*_s \nsucc \bm{\mu}^*_s - c_{t, s})\\
 = &\mathbb{P}\left( (\bar{X}^*_{s,1} < \mu^*_{s,1} - c_{t, s}) \lor \cdots \lor (\bar{X}^*_{s,D} < \mu^*_{s,D} - c_{t, s}) \right)\\
 \leq& \sum^D_{d=1} \mathbb{P} \left( \bar{X}^*_{s,d} < \mu^*_{s,d} - c_{t, s} \right) \tag{Union Bound}\\
 \leq& \sum^D_{d=1} \frac{1}{D} t^{-4} = t^{-4}\tag{Chernoff-Hoeffding Bound}
\end{align*}
Similarly, $\mathbb{P}(\bar{\bm{X}}_{i, s_k} \nprec \bm{\mu}_{i, s_k} - c_{t, s}) \leq t^{-4}$

Let $$l = \max\left\{\ceil[\bigg]{\frac{8\ln{t} + 2\ln{D}}{(1-\xi)^2 \min\limits_{k,d} \Delta^2_{k, d}}}, N_0(\xi)\right\}.
$$ Since $s_{k, d} \geq l_0$, (\ref{eq:impossible}) is false.

Therefore, plugging the above results into the bound on $T_k(n)$ and taking expectations of both sides, we get
\begin{align*}
    \mathbb{E}[T_k(n)] &\leq \ceil[\bigg]{\frac{8\ln{t}+2\ln{D}}{(1-\xi)^2 \min\limits_{k, d} \Delta^2_{k, d}}} + N_0(\xi)+ \sum^\infty_{t=1} \sum^{t-1}_{s=1} \sum^{t-1}_{s_k=l}\\
    &\left( \mathbb{P}(\bar{\bm{X}}^*_s \nsucc \bm{\mu}^*_s - c_{t, s}) + \mathbb{P}(\bar{\bm{X}}_{k, s_k} \nprec \bm{\mu}_{k, s_k} - c_{t, s}) \right)\\
     \leq& \frac{8\ln{t} + 2\ln{D}}{(1-\xi)^2 \min\limits_d \Delta^2_{k, d}} + N_0(\xi) + 1 + \frac{\pi^2}{3},
\end{align*}
which concludes the proof.

\subsection{Proof to Theorem \ref{thm:optimality}}
Let $k$ be the index of a sub-optimal node.
Then $\mathbb{P}(I_t \not \in \mathcal{P}^*) \leq \sum_{v_k \not \in \mathcal{P}^*} \mathbb{P}\left(\bar{\bm{X}}_{k, T_k(t)} \nprec \bar{\bm{X}}^*_{T^*(t)}\right)$.
Note that $\bar{\bm{X}}_{k, T_k(t)} \nprec \bar{\bm{X}}^*_{T^*(t)}$ implies 
\begin{equation}\label{failure_first}
\bar{\bm{X}}_{k, T_k(t)} \nprec \bm{\mu}_k + \frac{\bm{\Delta}_k}{2}, 
\end{equation}
or 
\begin{equation}\label{failure_second}
\bar{\bm{X}}^*_{T^*(t)} \nsucc \bm{\mu}^* + \frac{\bm{\Delta}_k}{2}.
\end{equation}
Otherwise, suppose Eq.~\eqref{failure_first} and \eqref{failure_second} do not hold, we have $\bar{\bm{X}}_{k, T_k(t)} \prec \bar{\bm{X}}^*_{T^*(t)}$ which yields a contradiction.
Hence, 
\begin{align*}
&\mathbb{P}\left(\bar{\bm{X}}_{k, T_k(t)} \nprec \bar{\bm{X}}^*_{T^*(t)}\right) \\
\leq& \underbrace{\mathbb{P}(\bar{\bm{X}}_{k, T_k(t)} \nprec \bm{\mu}_k + \frac{\bm{\Delta}_k}{2})}_{\text{first term}}
+ \underbrace{\mathbb{P}(\bar{\bm{X}}^*_{T^*(t)} \nsucc \bm{\mu}^* + \frac{\bm{\Delta}_k}{2})}_{\text{second term}}.
\end{align*}

Here we show how to bound the first term:
\begin{align*}
    &\mathbb{P}(\bar{\bm{X}}_{k, T_k(t)} \nprec \bm{\mu}_k + \frac{\bm{\Delta}_k}{2})\\
    \leq& \sum^D_{d=1} \mathbb{P}(\bar{X}_{k, T_k(t), d} > \mu_{k,d} + \frac{\Delta_{k, d}}{2})\tag{Union Bound}\\
    \leq& \sum^D_{d=1} \mathbb{P}(\bar{X}_{k, T_k(t), d} \geq \mu_{k, T_k(t), d} - \underbrace{|\delta_{k, T_k(t), d}|}_{\text{converges to 0}} + \frac{\Delta_{k, d}}{2})\\
    \leq& \sum^D_{d=1} \mathbb{P}(\bar{X}_{k, T_k(t), d} \geq \mu_{k, T_k(t), d} + \frac{\Delta_{k, d}}{4}) \\
    \leq& \sum^D_{d=1} \text{constant} (\frac{1}{t})^{\frac{\rho}{2}\left( \frac{\min\limits_{k, d}\Delta_{k, d}}{36} \right)^2} \tag{Lemma \ref{tail_inequality}}
\end{align*}
The last step makes use of Lemma \ref{lower_bound} and Lemma \ref{tail_inequality}.

The second term can be bounded in a similar way. Finally, an integration of the bounds shows that the failure probability converges to $0$ at a polynomial rate as the number of selection goes to infinity.


\bibliographystyle{plainnat}
\bibliography{references}

\begin{thebibliography}{31}
\providecommand{\natexlab}[1]{#1}
\providecommand{\url}[1]{\texttt{#1}}
\expandafter\ifx\csname urlstyle\endcsname\relax
  \providecommand{\doi}[1]{doi: #1}\else
  \providecommand{\doi}{doi: \begingroup \urlstyle{rm}\Url}\fi

\bibitem[Arora et~al.(2017)Arora, Fitch, and Sukkarieh]{arora2017mcts}
Akash Arora, Robert Fitch, and Salah Sukkarieh.
\newblock An approach to autonomous science by modeling geological knowledge in
  a bayesian framework.
\newblock In \emph{IEEE/RSJ International Conference on Intelligent Robots and
  Systems (IROS)}, pages 3803--3810. IEEE, 2017.

\bibitem[Best et~al.(2019)Best, Cliff, Patten, Mettu, and Fitch]{best2019dec}
Graeme Best, Oliver~M Cliff, Timothy Patten, Ramgopal~R Mettu, and Robert
  Fitch.
\newblock Dec-mcts: Decentralized planning for multi-robot active perception.
\newblock \emph{The International Journal of Robotics Research}, 38\penalty0
  (2-3):\penalty0 316--337, 2019.

\bibitem[Binney and Sukhatme(2012)]{binney2012branch}
Jonathan Binney and Gaurav~S Sukhatme.
\newblock Branch and bound for informative path planning.
\newblock In \emph{IEEE International Conference on Robotics and Automation},
  pages 2147--2154. IEEE, 2012.

\bibitem[Binney et~al.(2013)Binney, Krause, and Sukhatme]{binney13}
Jonathan Binney, Andreas Krause, and Gaurav~S. Sukhatme.
\newblock Optimizing waypoints for monitoring spatiotemporal phenomena.
\newblock \emph{International Journal on Robotics Research (IJRR)}, 32\penalty0
  (8):\penalty0 873--888, 2013.

\bibitem[C{\'a}p and Alonso-Mora(2018)]{Cp2018MultiObjectiveAO}
Michal C{\'a}p and Javier Alonso-Mora.
\newblock Multi-objective analysis of ridesharing in automated
  mobility-on-demand.
\newblock In \emph{Robotics: Science and Systems}, 2018.

\bibitem[Choudhury et~al.(2016)Choudhury, Dellin, and Srinivasa]{POMP}
Shushman Choudhury, Christopher~M Dellin, and Siddhartha~S Srinivasa.
\newblock Pareto-optimal search over configuration space beliefs for anytime
  motion planning.
\newblock In \emph{IEEE/RSJ International Conference on Intelligent Robots and
  Systems (IROS)}, pages 3742--3749. IEEE, 2016.

\bibitem[Corah and Michael(2017)]{corah2017efficient}
Micah Corah and Nathan Michael.
\newblock Efficient online multi-robot exploration via distributed sequential
  greedy assignment.
\newblock In \emph{Proceedings of robotics: science and systems}, 2017.

\bibitem[Ghrist et~al.(2004)Ghrist, O’Kane, and LaValle]{ghrist2004pareto}
Robert Ghrist, Jason~M O’Kane, and Steven~M LaValle.
\newblock Pareto optimal coordination on roadmaps.
\newblock In \emph{Algorithmic foundations of robotics VI}, pages 171--186.
  Springer, 2004.

\bibitem[Guestrin(2003)]{Guestrin2003}
Carlos~Ernesto Guestrin.
\newblock \emph{Planning Under Uncertainty in Complex Structured Environments}.
\newblock PhD thesis, Stanford, CA, USA, 2003.
\newblock AAI3104233.

\bibitem[Hollinger and Sukhatme(2014)]{hollinger2014sampling}
Geoffrey~A Hollinger and Gaurav~S Sukhatme.
\newblock Sampling-based robotic information gathering algorithms.
\newblock \emph{The International Journal of Robotics Research}, 33\penalty0
  (9):\penalty0 1271--1287, 2014.

\bibitem[Kocsis and Szepesv{\'a}ri(2006)]{kocsis2006bandit}
Levente Kocsis and Csaba Szepesv{\'a}ri.
\newblock Bandit based monte-carlo planning.
\newblock In \emph{European conference on machine learning}, pages 282--293.
  Springer, 2006.

\bibitem[Kollat et~al.(2008)Kollat, Reed, and Kasprzyk]{kollat2008new}
Joshua~B Kollat, Patrick~M Reed, and Joseph~R Kasprzyk.
\newblock A new epsilon-dominance hierarchical bayesian optimization algorithm
  for large multiobjective monitoring network design problems.
\newblock \emph{Advances in Water Resources}, 31\penalty0 (5):\penalty0
  828--845, 2008.

\bibitem[LaValle(2006)]{lavalle2006planning}
Steven~M LaValle.
\newblock \emph{Planning algorithms}.
\newblock Cambridge university press, 2006.

\bibitem[LaValle and Hutchinson(1998)]{lavalle1998motion}
Steven~M LaValle and Seth~A Hutchinson.
\newblock Optimal motion planning for multiple robots having independent goals.
\newblock \emph{IEEE Transactions on Robotics and Automation}, 14\penalty0
  (6):\penalty0 912--925, 1998.

\bibitem[Lee et~al.(2018)Lee, Yi, and Srinivasa]{lee2018sampling}
Jeongseok Lee, Daqing Yi, and Siddhartha~S Srinivasa.
\newblock Sampling of pareto-optimal trajectories using progressive objective
  evaluation in multi-objective motion planning.
\newblock In \emph{IEEE/RSJ International Conference on Intelligent Robots and
  Systems (IROS)}, pages 1--9. IEEE, 2018.

\bibitem[Leonard et~al.(2010)Leonard, Paley, Davis, Fratantoni, Lekien, and
  Zhang]{leonard2010coordinated}
Naomi~E Leonard, Derek~A Paley, Russ~E Davis, David~M Fratantoni, Francois
  Lekien, and Fumin Zhang.
\newblock Coordinated control of an underwater glider fleet in an adaptive
  ocean sampling field experiment in monterey bay.
\newblock \emph{Journal of Field Robotics}, 27\penalty0 (6):\penalty0 718--740,
  2010.

\bibitem[Low(2009)]{Low2009thesis}
Kian~Hsiang Low.
\newblock \emph{Multi-robot Adaptive Exploration and Mapping for Environmental
  Sensing Applications}.
\newblock PhD thesis, Carnegie Mellon University, Pittsburgh, PA, USA, 2009.

\bibitem[Ma et~al.(2018)Ma, Liu, Heidarsson, and Sukhatme]{ma2018data}
Kai-Chieh Ma, Lantao Liu, Hordur~K Heidarsson, and Gaurav~S Sukhatme.
\newblock Data-driven learning and planning for environmental sampling.
\newblock \emph{Journal of Field Robotics}, 35\penalty0 (5):\penalty0 643--661,
  2018.

\bibitem[Marchant et~al.(2014)Marchant, Ramos, Sanner,
  et~al.]{marchant2014sequential}
Roman Marchant, Fabio Ramos, Scott Sanner, et~al.
\newblock Sequential bayesian optimisation for spatial-temporal monitoring.
\newblock In \emph{UAI}, pages 553--562, 2014.

\bibitem[McCammon and Hollinger(2018)]{mccammon2018}
Seth McCammon and Geoffrey~A Hollinger.
\newblock Topological hotspot identification for informative path planning with
  a marine robot.
\newblock In \emph{IEEE International Conference on Robotics and Automation
  (ICRA)}, pages 1--9. IEEE, 2018.

\bibitem[Meliou et~al.(2007)Meliou, Krause, Guestrin, and
  Hellerstein]{Meliou07}
Alexandra Meliou, Andreas Krause, Carlos Guestrin, and Joseph~M. Hellerstein.
\newblock Nonmyopic informative path planning in spatio-temporal models.
\newblock In \emph{Proceedings of National Conference on Artificial
  Intelligence (AAAI)}, pages 602--607, 2007.

\bibitem[Rasmussen and Williams(2005)]{Rasmussen2005}
Carl~Edward Rasmussen and Christopher K.~I. Williams.
\newblock \emph{Gaussian Processes for Machine Learning}.
\newblock The MIT Press, 2005.

\bibitem[Reif(1979)]{reif1979complexity}
John~H Reif.
\newblock Complexity of the mover's problem and generalizations.
\newblock In \emph{20th Annual Symposium on Foundations of Computer Science},
  pages 421--427, 1979.

\bibitem[Roijers et~al.(2013)Roijers, Vamplew, Whiteson, and
  Dazeley]{roijers2013survey}
Diederik~M Roijers, Peter Vamplew, Shimon Whiteson, and Richard Dazeley.
\newblock A survey of multi-objective sequential decision-making.
\newblock \emph{Journal of Artificial Intelligence Research}, 48:\penalty0
  67--113, 2013.

\bibitem[Singh et~al.(2007)Singh, Krause, Guestrin, Kaiser, and
  Batalin]{Singh2007}
Amarjeet Singh, Andreas Krause, Carlos Guestrin, William Kaiser, and Maxim
  Batalin.
\newblock Efficient planning of informative paths for multiple robots.
\newblock In \emph{Proceedings of the 20th International Joint Conference on
  Artifical Intelligence}, pages 2204--2211, 2007.

\bibitem[Singh et~al.(2009)Singh, Krause, Guestrin, and
  Kaiser]{singh2009efficient}
Amarjeet Singh, Andreas Krause, Carlos Guestrin, and William~J Kaiser.
\newblock Efficient informative sensing using multiple robots.
\newblock \emph{Journal of Artificial Intelligence Research}, 34:\penalty0
  707--755, 2009.

\bibitem[Soltero et~al.(2012)Soltero, Schwager, and Rus]{SolteroSR12}
Daniel~E Soltero, Mac Schwager, and Daniela Rus.
\newblock Generating informative paths for persistent sensing in unknown
  environments.
\newblock In \emph{2012 IEEE/RSJ International Conference on Intelligent Robots
  and Systems}, pages 2172--2179. IEEE, 2012.

\bibitem[Sun et~al.(2017)Sun, Sood, Dey, Ranade, Prakash, and
  Kapoor]{sun2017no}
Wen Sun, Niteesh Sood, Debadeepta Dey, Gireeja Ranade, Siddharth Prakash, and
  Ashish Kapoor.
\newblock No-regret replanning under uncertainty.
\newblock In \emph{IEEE International Conference on Robotics and Automation
  (ICRA)}, pages 6420--6427. IEEE, 2017.

\bibitem[Susca et~al.(2008)Susca, Bullo, and Martinez]{susca2008}
Sara Susca, Francesco Bullo, and Sonia Martinez.
\newblock Monitoring environmental boundaries with a robotic sensor network.
\newblock \emph{IEEE Transactions on Control Systems Technology}, 16\penalty0
  (2):\penalty0 288--296, 2008.

\bibitem[Wang and Sebag(2012)]{wang2012multi}
Weijia Wang and Michele Sebag.
\newblock Multi-objective monte-carlo tree search.
\newblock In \emph{Asian conference on machine learning}, volume~25, pages
  507--522, 2012.

\bibitem[Yu et~al.(2014)Yu, Schwager, and Rus]{YuSchRus14ICRA}
Jingjin Yu, Mac Schwager, and Daniela Rus.
\newblock Correlated orienteering problem and its application to informative
  path planning for persistent monitoring tasks.
\newblock In \emph{IEEE/RSJ International Conference on Intelligent Robots and
  Systems}, pages 342--349. IEEE, 2014.

\end{thebibliography}

\end{document}